\definecolor{mygray}{gray}{.9}
\newcommand{\model}{\texttt{MetaTKGR}\xspace }
\newtheorem{theorem}{Theorem}[section]
\newtheorem{definition}{Definition}[section]
  \newcommand\figcaption{\def\@captype{figure}\caption}
  \newcommand\tabcaption{\def\@captype{table}\caption}
\title{Learning to Sample and Aggregate: Few-shot Reasoning over Temporal Knowledge Graphs}
\author{%
  Ruijie Wang$^{1}$, Zheng Li$^{2}$, Dachun Sun$^{1}$, Shengzhong Liu$^{1}$, Jinning Li$^{1}$, \AND Bing Yin$^{2}$, Tarek Abdelzaher$^{1}$ \\
  $^{1}$University of Illinois at Urbana Champaign, IL, USA \\
  $^{2}$Amazon.com Inc, CA, USA\\
  \texttt{\{ruijiew2, dsun18, sl29, jinning4, zaher\}@illinois.edu} \\
  {\tt \{amzzhe, alexbyin\}@amazon.com} \\
}
\begin{document}
\maketitle
\begin{abstract}
In this paper, we investigate a realistic but underexplored problem, called few-shot temporal knowledge graph reasoning, that aims to predict future facts for newly emerging entities based on extremely limited observations in evolving graphs. It offers practical value in applications that need to derive instant new knowledge about new entities in temporal knowledge graphs (TKGs) with minimal supervision. The challenges mainly come from the \textit{few-shot} and \textit{time shift} properties of new entities. First, the limited observations associated with them are insufficient for training a model from scratch. Second, the potentially dynamic distributions from the initially observable facts to the future facts ask for explicitly modeling the evolving characteristics of new entities. We correspondingly propose a novel Meta Temporal Knowledge Graph Reasoning (\model) framework. Unlike prior work that relies on rigid neighborhood aggregation schemes to enhance low-data entity representation, \model dynamically adjusts the strategies of sampling and aggregating neighbors from recent facts for new entities, through temporally supervised signals on future facts as instant feedback. Besides, such a meta temporal reasoning procedure goes beyond existing meta-learning paradigms on static knowledge graphs that fail to handle temporal adaptation with large entity variance. We further provide a theoretical analysis and propose a temporal adaptation regularizer to stabilize the meta temporal reasoning over time. Empirically, extensive experiments on three real-world TKGs demonstrate the superiority of \model over state-of-the-art baselines by a large margin.
\end{abstract}

\section{Introduction}
\label{introduction}








%
Temporal Knowledge Graphs (TKGs)~\cite{YAGO,ICEWS18,WIKI,acekg} store temporally evolving facts (By fact, we refer to a subject entity has a relation with an object entity at some time, e.g., ``\textit{Macron}'' was ``\textit{elected}'' as ``\textit{French president}'' in ``\textit{2017}''). It has been increasingly used in various knowledge-driven and time-sensitive applications, such as social event forecasting~\cite{KG4Social,KG4social2,Dydiff-vae}, question answering~\cite{KBQA}, and recommendation~\cite{TKG4Rec,TKG4Rec2,rete}. 
Large scale TKGs are hard to obtain due to the time-varying nature of labels and the excessive cost of human annotation. As such, automating prediction and reasoning about missing facts over time has attracted more recent interest ~\cite{Know-Evolve,HyTE,TA-DistMult,dyrep,Renet}. However, most prior efforts are limited to reasoning about existing entities but neglecting the commonly observed low-data regime where many new entities emerge with extremely few observations as new events or topics evolve over time~\cite{NewKG}. This restricts their applicability.

Instead, inspired by the human ability to recognize new concepts from exposure to only very few instances, we propose and solve a practical and challenging {\em few-shot temporal knowledge graph reasoning} problem that specifically aims to predict future facts for \textbf{newly emerging entities} with a few observed links on TKGs. 
Existing efforts~\cite{MEAN,LAN,GEN,cui-etal-2021-knowledge} in similar settings simplify the task by ignoring the emergence of new entities over time and randomly simulating unseen entities on static knowledge graphs.
This formulation, originating from few-shot learning in the vision domain~\cite{MAML,Meta-optimization1}, fails at capturing the real-world divergence between the distributions of seen and unseen entities. To the best of our knowledge, the generalized few-shot temporal reasoning task, that aims to imitate the fast learning ability of humans, has still not been formally investigated. 


The challenges in solving this problem are two-fold:
1) \textbf{few-shot}: the extremely few facts associated with new entities cannot provide sufficient information for representation learning from scratch. 2) \textbf{time shift}: new entities usually exhibit different characteristics in the future, due to the time-evolving nature of TKGs. It leads models to generalize poorly in the future. Unfortunately, prior work fails to overcome these two challenges simultaneously. 

On one hand, to enhance low-data entity characterization, a diverse range of neighborhood aggregation schemes that retrieve related information from other entities are explored, such as hard subgraph sampling methods (e.g., vanilla hop-based sampling~\cite{R-GCN}, random sampling~\cite{graphsage}, personalized Pagerank sampling~\cite{ppr,rete}, importance sampling~\cite{importance}, etc) and soft sampling via trainable attention mechanisms~\cite{Know-Evolve,dyrep,Renet}. However, those rigid methods cannot well adapt to the new entities with the time-varying distributions due to the lack of instant supervision. On the other hand, recent attempts~\cite{GEN,FSRL,G-Meta,MetaGraph} leverage meta-learning to improve unseen entity adaptation. While the setup for unseen entities is simulated by randomly splitting entity sets from static graphs, they fail to handle the more realistic temporal adaptation with large entity variances.

To overcome both the aforementioned challenges,  we propose a novel Meta Temporal Knowledge Graph Reasoning (\model) framework, where learning the strategies for sampling and aggregating neighbors from recent facts (to enhance the new entities' predictions) can be dynamically adapted from signals of temporal supervision on their future facts as instant feedback. Intuitively, the global knowledge of sampling and aggregating can be extracted as learning to learn ability through such a meta-optimization, which takes the time-evolving nature of knowledge into consideration, avoiding progressive performance drift over time. Concretely, we formulate this learning strategy to be parameterized by a temporal encoder. On the simulated new entities during training phase, starting from the initial few-shot links, the temporal encoder can softly sample and attentively integrate relative and time-aware information from the temporal neighbors, making the strategy learning (neighbor sampling + aggregation) differentiable. The quantitative measurement of how well the current strategy performs can be reflected in the performance on predicting future facts allowing one to automatically adjust the strategy in turn. During the nested loop for meta-optimization, we firstly learn the temporal encoder on recent facts (\textit{inner loop}), then gradually increase the difficulties by autoregressively adapting it to farther away future facts (\textit{outer loop}). Furthermore, to better estimate the supervision signal given by future facts in different and unseen distributions, we adopt the PAC-Bayes method~\cite{PAC_Bayes} to theoretically analyze the temporal adaptation bound on the future facts. This can serve as an adaptation regularizer to provide  stability and improve the generalization ability of current strategy over time. Empirically, extensive experiments on three real-world TKGs validate the effectiveness of \model, which significantly outperforms all baselines from static/temporal KG reasoning and few-shot (knowledge) graph learning areas, by up to $11.4\%$ relative gain on average with competitive efficiency.  To sum up, our main contributions are three-fold:




\begin{itemize}
     \item {\bf Problem formulation:} We explore a practical {\em few-shot temporal knowledge graph reasoning} setting, minimizing the gap with the realistic few-shot learning scenarios for humans;
    \item {\bf Novel framework:} We propose a novel meta temporal reasoning framework on TKGs, namely \model, to address both few-shot and time shift challenges;
    \item {\bf Extensive evaluation:} the proposed \model method demonstrates significant gains in effectiveness on three real-world TKGs over a diverse range of state-of-the-art baselines.
\end{itemize} 
\vspace{-1.5mm}

\section{Problem Definition}
\label{definition}
In this section, we formally define the few-shot temporal knowledge graph reasoning task. First of all, a temporal knowledge graph can be defined as follows:

\begin{definition}[{\bf Temporal Knowledge Graph}] A temporal knowledge graph can be denoted as $\mathcal{G}^T = \{(e_s, r, e_o, t)\} \subseteq \mathcal{E}^T \times \mathcal{R} \times \mathcal{E}^T \times \mathcal{T}$, where $\mathcal{E}^T$ denotes a set of entities that appear in time interval $(0,T)$, $\mathcal{R}$ denotes relation set, and $\mathcal{T}$ denotes timestamp set. Each temporal link $(e_s, r, e_o, t)$ refers to the fact that a subject entity $e_s \in \mathcal{E}^T$ has a relation $r \in \mathcal{R}$ with an object entity $e_o \in \mathcal{E}^T$ at timestamp $t \in \mathcal{T}$.
\end{definition}

As a temporal knowledge graph evolves, new entities continuously emerge, leading to an expansion of the entities set $\mathcal{E}^T$ over time. We further define new entities as follows:

\begin{definition}[{\bf New Entities in a Temporal Knowledge Graph}]
Given a time interval $(T, T^\prime)$ ($T^\prime > T$), entities that join the graph during $(T,T^\prime)$, i.e., $\Tilde{e} \in \mathcal{E}^{T^\prime} \setminus \mathcal{E}^T$ ($\setminus$ denotes setminus), are defined as new entities on time interval $(T, T^\prime)$. 
\end{definition}

Knowledge graph reasoning (KGR) is essentially the problem of predicting missing facts in the partially observed KG. While existing few-shot KGR models simulate unseen entities by randomly selecting from the existing entities, we focus specifically on predictions for newly emerging entities over time, which can be formalized as {\em few-shot temporal knowledge graph reasoning} task:

\begin{definition}[{\bf Few-shot Temporal Knowledge Graph Reasoning}]
\label{def:task}
Given a temporal knowledge graph $\mathcal{G}^T$ collected no later than $T$, for each new entity $\Tilde{e} \in \mathcal{E}^{T^\prime} \setminus \mathcal{E}^T (T^\prime > T)$, we assume the first $K$ associated facts are observed: $\{(\Tilde{e}, r_i, e_i, t_i)~ \text{or}~(e_i, r_i, \Tilde{e}, t_i)\}_{i=1}^K$. The task aims to predict the missing entities in the future facts $(\Tilde{e}, r, ?, t)$ or $(?, r, \Tilde{e}, t)$ given the relation $r \in \mathcal{R}$ and specific time $t \in (T, T^\prime)$. We further assume $K$ is a small number, as a realistic setting.
\end{definition}

\section{Learning to Sample And Aggregate with \model }
\label{sec:model}
In this section, we present a novel framework \model to solve the few-shot temporal knowledge graph reasoning problem. We first introduce the basic setup of our learning objective and the overall framework, and then detail the temporal encoder module to represent new entities, followed by introduction of meta temporal reasoning for model training.

\subsection{Learning Objective}
Suppose we are given a temporal knowledge graph $\mathcal{G}^T = \{(e_s, r, e_o, t)\} \subseteq \mathcal{E}^T \times \mathcal{R} \times \mathcal{E}^T \times \mathcal{T}$ collected no later than time $T$. For each new entity $\Tilde{e}$ appearing within a later interval $(T, T^\prime)$, we aim to predict future links $(\Tilde{e}, r, e, t)$ or $(e, r, \Tilde{e}, t)$ happening at timestamp $t \in (T, T^\prime)$. Towards this goal, we measure correctness of each possible quadruple by a score function $s(\cdot; \phi)$ parameterized by $\phi$, and maximize the scores of true quadruples containing any new entities in order to rank them higher than all other false quadruples:
\vspace{-1mm}
\begin{equation}
    \small
    \max_\phi \mathbb{E}_{\Tilde{e}\sim p(\Tilde{\mathcal{E}})}[s(\Tilde{e}, r, e, t;\phi)~\text{or}~s(e, r, \Tilde{e}, t;\phi)],~\text{where}~\Tilde{\mathcal{E}} = \mathcal{E}^{T^\prime} \setminus \mathcal{E}^{T}, ~ t \in (T, T^\prime),
\end{equation}
\noindent where $p(\Tilde{\mathcal{E}})$ denotes distribution of all new entities, $\phi$ denotes parameters of model to represent entity/relation into $d$-dimensional space $\mathbf{h}_{\Tilde{e}}, \mathbf{h}_e, \mathbf{h}_r \in \mathbb{R}^d$. To represent $\Tilde{e}$ into $\mathbf{h}_{\Tilde{e}}$ via $\phi$ , as aforementioned, the challenges lie in how to enable $\phi$ to encode generalized knowledge (how to sample and aggregate temporal neighbors) that can be easily adapted to new $\Tilde{e}$ and achieve robust performance over time, even for a long time interval $(T, T^\prime)$. We thereby formulate few-shot temporal knowledge graph reasoning as a meta-learning problem to extract such knowledge.

\begin{figure}[t]
\centering
\includegraphics[width = 1.0\linewidth]{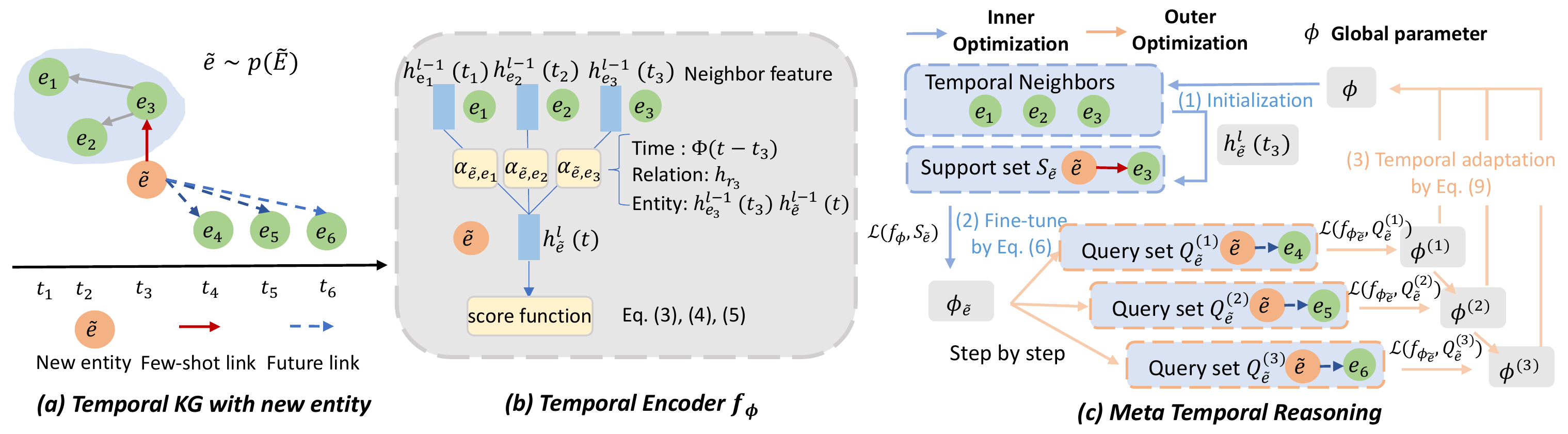}

\caption{An overview of \model framework. (a) Task illustration; (b) Temporal encoder parameterized by $\phi$ samples and aggregates information from temporal neighbors; (c) Meta temporal reasoning adopts a bi-level optimization to learn $\phi$. In the \textbf{inner optimization}: \model initializes entity-specific parameters from global parameters via step (1), and fine-tunes them on support set via step (2) to optimize entity-specific parameters; In the \textbf{outer optimization}, \model optimizes global parameters on query set via step (3) to learn good and robust global parameters.}
\label{fig:framework}
\vspace{-5mm}
\end{figure}

\subsection{MetaTKGR Framework}
Figure~\ref{fig:framework} shows the \model framework. To be more formal, each task corresponds to each new entity $\Tilde{e}$ over distribution  $p(\Tilde{\mathcal{E}})$, and the links can be represented as a chronological sequence $\{(\Tilde{e}, r_i, e_i, t_i)~\text{or}~(e_i, r_i, \Tilde{e}, t_i) | t_i\in(T,T^\prime), t_i \leq t_j~\text{if}~i<j\}_{i = 1}^{N_{\Tilde{e}}}$, where $N_{\Tilde{e}}$ denotes total number of links associated to $\Tilde{e}$. Then we divide them into support set $\mathcal{S}_{\Tilde{e}} = \{(\Tilde{e}, r_i, e_i, t_i)~ \text{or}~(e_i, r_i, \Tilde{e}, t_i)\}^{K}_{i=1}$, 
and query set $\mathcal{Q}_{\Tilde{e}} = \{(\Tilde{e}, r_i, e_i, t_i)~\text{or}~(e_i, r_i, \Tilde{e}, t_i)\}_{i=K+1}^{N_{\Tilde{e}}}$, where $K$ denotes the amount of initially observable facts of $\Tilde{e}$. 

During the meta-training phase, we simulate a set of new entities from existing entity set by assuming there are only few-shot links of these entities (support set $\mathcal{S}_{\Tilde{e}}$). The model first fine-tunes parameters $\phi$ on $\mathcal{S}_{\Tilde{e}}$ ({\em inner loop}), and then minimizes the predictive loss on corresponding query set $\mathcal{Q}_{\Tilde{e}}$ using the updated parameters $\phi_{\Tilde{e}}$ ({\em outer loop}). It optimizes the global sampling and aggregation strategy as generalized knowledge (i.e., learning to learn ability) by this bi-level optimization, as this procedure mimics the normal machine learning and inference process. For simplicity, to represent new entity $\Tilde{e}$ along time, we denote $f_{\phi_{\Tilde{e}}}$ as the temporal encoder parameterized by $\phi_{\Tilde{e}}$, $\mathcal{L}(f_{\phi_{\Tilde{e}}},\mathcal{S}_{\Tilde{e}})$ as model predictive loss on $\mathcal{S}_{\Tilde{e}}$. Model-agnostic meta-learning (MAML)~\cite{MAML} can be utilized to fulfill this goal as follows:
\begin{equation}    
    \small
    \phi^* \longleftarrow \arg\min_{\phi} \mathbb{E}_{\Tilde{e}\sim p(\Tilde{\mathcal{E}})}\left[ \mathcal{L}(f_{\phi_{\Tilde{e}}}, \mathcal{Q}_{\Tilde{e}})\right],~~~\text{where}~\phi_{\Tilde{e}} = \phi - \eta \frac{\partial\mathcal{L}(f_{\phi},\mathcal{S}_{\Tilde{e}})}{\partial\phi} ,
    \label{eq:MAML}
\end{equation}
\noindent
where $\eta$ denotes inner-loop learning rate. However, this training strategy cannot guarantee good generalization ability over time (\textit{challenge 2}), as they assume identical distributions between $\mathcal{S}_{\Tilde{e}}$ and $\mathcal{Q}_{\Tilde{e}}$, contradicting the facts that new entities are highly evolving over time on TKGs. To coherently resolve the two challenges, we advance the existing few-shot KG reasoning works by proposing:

\begin{itemize}
    \item {\em Temporal encoder}, which learns the time-aware representation of each new entity $\Tilde{e}$ by sampling and aggregating information from TKG neighbors on continuous domain.
    \item {\em Meta temporal reasoning}, which learns an optimal sampling and aggregating parameters via bi-level optimization (\textbf{inner optimization} and \textbf{outer optimization}). The learned parameters can be easily adapted to new entities and maintain temporal robustness. 
\end{itemize}



\begin{wrapfigure}{R}{.58\linewidth}
\vspace{-5mm}
\begin{algorithm}[H]
\caption{Temporal Neighbor Sampler.}
\label{al:sampler}
\small
\KwIn{New entity $\Tilde{e}$, temporal knowledge graph $\mathcal{G}^t$, current timestamp $t$, neighbor budget $b$, time bound $\Delta t$.}
\KwOut{Temporal Neighbor $\mathcal{N}_{\Tilde{e}}(t)$.}
Initialize $Queue$ $\leftarrow$ ${\Tilde{e}}$, $\mathcal{N}_{\Tilde{e}}(t)$ $\leftarrow$ $\{\}$;\\
\While{$|\mathcal{N}_{\Tilde{e}}(t)| < b$ and $Queue$ is not empty}{
    $e \leftarrow Queue.dequeue()$; \\
    \For{each $(e_s, r, e_o, t)$ in $\mathcal{G}^t$}{
        \If{$e_s == e$ and $t > t_{max} - \Delta t$}{
            Add $(e_s, r, t)$ to $\mathcal{N}_{\Tilde{e}}(t)$, Add $e_s$ to $Queue$;
            }
        \If{$e_o == e$ and $t > t_{max} - \Delta t$}{
            Add $(e_o, r, t)$ to $\mathcal{N}_{\Tilde{e}}(t)$, Add $e_o$ to $Queue$; \\
            }
        }
    }
\end{algorithm}
\vspace{-6mm}
\end{wrapfigure}

\subsection{Temporal Encoder}
\label{sec:encoder}
On temporal knowledge graphs, entities are evolving over time. The temporal encoder $f_{\phi}$ is to embed each entity $\Tilde{e}$ into low-dimensional latent space at each time: $\small{\mathbf{h}_{\Tilde{e}}(t) \in \mathbb{R}^d}$, where it can model the temporal pattern of $\Tilde{e}$ along time. By doing so, it resolves the scarcity issues caused by few-shot links to some extent. Towards this goal, $f_{\phi}$ first samples temporal neighbors from TKGs, then attentively aggregates information from the temporal neighbors of each entity, which takes neighbor feature, relation feature and time feature into account. 

To better represent few-shot entities, a wider range of neighbors should be considered, as the close neighbors around the new entities are usually insufficient. Conventionally, stacking several graph neural networks (GNNs)-based layers can integrate information from multi-hop neighbors~\cite{R-GCN,graphsage}. In few-shot cases, such layer-by-layer procedure faces difficulties. For one thing, the limited one-hop neighbors can dominate the aggregation process, as the information of multi-hop neighbors is all propagated from them to the 
target entity. Also, the size of neighbors grows exponentially with the increase of GNN layers, as they collect all neighbors in each hop without strategic selection, causing severe efficiency issues. Instead, our temporal encoder first samples multi-hop neighbors via a time-bounded breadth-first-search algorithm, then aggregates information directly from the sampled neighbors in an attentive manner. As summarized in Algorithm~\ref{al:sampler}, at each time, it selectively samples up to $b$ multi-hop temporal neighbors $\mathcal{N}_{\Tilde{e}}(t)$ which have interactions within a recent time range $\Delta t$. Given the temporal neighbor $\mathcal{N}_{\Tilde{e}}(t)$ for each new entity, temporal encoder $f_{\phi}$ represents a new entity as $\mathbf{h}_{\Tilde{e}}(t)$ at time $t$:
\begin{equation}
    \small
    \mathbf{h}^l_{\Tilde{e}}(t) = \sigma\left(\sum_{(e_i, r_i, t_i) \in \mathcal{N}_{\Tilde{e}}(t)} \alpha_{\Tilde{e},e_i} \left(\mathbf{h}_{e_i}^{l-1}(t_i) \mathbf{W}\right)\right),
\end{equation}
\noindent where $l$ denotes the layer number, $\sigma(\cdot)$ denotes the activation function {\em ReLU}, $\alpha_{\Tilde{e},e_i}$ denotes the attention weight of entity $i$ to new entity $\Tilde{e}$, and $\mathbf{W}$ is the trainable transformation matrix. To aggregate from history, $\alpha_{\Tilde{e},e_i}$ is supposed to be aware of entity feature, time delay and topology feature induced by relations. Thus, we design $\alpha_{\Tilde{e},i}$ as follows:
\begin{equation}
    \small
    \alpha_{\Tilde{e},e_i} = \frac{\exp(q_{\Tilde{e},e_i})}{\sum_{(e_k, r_k, t_k) \in \mathcal{N}_{\Tilde{e}}(t)} \exp(q_{\Tilde{e},e_k})}, ~~~ q_{\Tilde{e},e_i} = \mathbf{a}\left(\mathbf{h}^{l-1}_{\Tilde{e}} \| \mathbf{h}^{l-1}_{e_i} \| \mathbf{h}_{r_i} \| \Phi(t - t_i)\right),
\end{equation}
\noindent
where $q_{\Tilde{e},e_i}$ measures the pairwise importance by considering the entity embedding, relation embedding and time embedding, $\mathbf{a}\in\mathbb{R}^{4d}$ is the shared parameter in the attention mechanism. Following~\cite{TGAT} we adopt random Fourier features as time encoding $\Phi(\Delta t)$ to reflect the time difference.


\subsection{Meta Temporal Reasoning}
\label{sec:optimization}
Let $\mathbf{h}_{\Tilde{e}}^L(t)$ denote the representations of  new entity $\Tilde{e}$ produced by $f_\phi$ with $L$ layers. For each quadruple $(\Tilde{e}, r, e, t)$, we employ a translation-based score function~\cite{TransE} to measure the correctness: $s(\Tilde{e}, r, e, t;\phi) = -\|\mathbf{h}_{\Tilde{e}}^{L}(t) + \mathbf{h}_r - \mathbf{h}_{e}^L(t)\|^2$. We first fine-tune $f_\phi$ on support set $\mathcal{S}_{\Tilde{e}}$ for entity-specific parameters $f_{\phi_{\Tilde{e}}}$, and then optimize training loss on query set $\mathcal{Q}_{\Tilde{e}}$ to learn the global parameters shared by all entities. To calculate the predictive loss, since each set only contains positive quadruples, we perform negative sampling~\cite{TransE} to update \model by training it to rank positive quadruples higher than negative ones. Specifically, taking the support set $\mathcal{S}_{\Tilde{e}}$ of entity $\Tilde{e}$ as an example, we construct a negative set $\mathcal{S}_{\Tilde{e}}^- = \{(\Tilde{e}, r, e^-, t)~\text{or}~(e^-, r, \Tilde{e}, t)\}$ by replace the ground truth entity $e$ with a corrupted entity $e^-$. We then use empirical hinge loss on the given set as follows:
\begin{equation}
    \small
    \hat{\mathcal{L}}(f_{\phi_{\Tilde{e}}},\mathcal{S}_{\Tilde{e}}) = \sum_{(e_s, r, e_o, t)\in\mathcal{S}_{\Tilde{e}}}\sum_{(e_s, r, e_o, t)^-\in\mathcal{S}_{\Tilde{e}}^-}\max\left(\gamma - s(e_s, r, e_o, t) + s(e_s, r, e_o, t)^-, 0 \right),
    \label{eq:hinge}
\end{equation}
\noindent
where $\gamma > 0$ is a margin value to distinguish positive and negative quadruples.

\begin{wrapfigure}{R}{.6\linewidth}
\begin{algorithm}[H]
\caption{\model: Meta-training.}
\label{al:MetaTKGR}
\small
\KwIn{Temporal knowledge graph $\mathcal{G}^T\subseteq \mathcal{E}^T \times \mathcal{R} \times \mathcal{E}^T \times \mathcal{T}$, randomly initialized $\phi$.}
\KwOut{Learned global parameter $\phi$.}
Simulate new entity set $\Tilde{\mathcal{E}}$ from $\mathcal{E}^T$; \\
Train model parameter $\phi$ from many-shot entities $\mathcal{E}^T \setminus \Tilde{\mathcal{E}}$; \\
Construct $\mathcal{S}_{\Tilde{e}}$ and $\mathcal{Q}_{\Tilde{e}}$ for each new entity $\Tilde{e}$, where $\mathcal{Q}_{\Tilde{e}} = \{\mathcal{Q}_{\Tilde{e}}^{(1)}, \mathcal{Q}_{\Tilde{e}}^{(2)}, \cdots, \mathcal{Q}_{\Tilde{e}}^{(M)}\}$;\\
\While{$\phi$ not converge}{
    \For{each time interval $m$}{
        $\#$ \textbf{Outer optimization:} \\
        \For{each new entity $\Tilde{e}$}{
            $\#$ \textbf{Inner optimization:} \\
            Calculate $\hat{\mathcal{L}}(f_{\phi},\mathcal{S}_{\Tilde{e}})$ on $\mathcal{S}_{\Tilde{e}}$ by Eq.~\ref{eq:hinge}; \\
            Perform adaptation by Eq.~\ref{eq:local} to update $\phi_{\Tilde{e}}$; \\
            Calculate $\hat{\mathcal{L}}(f_{\phi_{\Tilde{e}}},\mathcal{Q}^{(m)}_{\Tilde{e}})$ by Eq.~\ref{eq:hinge}; \\
            }
        Calculate temporal adaptation regularizer by Eq.~\ref{eq:reg};\\
        Update $\phi^{(m)}$ by Eq.~\ref{eq:global};\\
        $\phi \leftarrow \phi^{(m)}$
        }
    }
\end{algorithm}
\vspace{-7mm}
\end{wrapfigure}

\noindent \textbf{Inner optimization}: For each task corresponding to new entity $\Tilde{e}\sim p(\Tilde{\mathcal{E}})$, we first adapt the global parameter $\phi$ for each new entity $\Tilde{e}$ by minimizing the predictive loss on the support set $\mathcal{S}_{\Tilde{e}}$:
\begin{equation}
    \small
    \phi_{\Tilde{e}} = \phi -  \eta\frac{\partial\hat{\mathcal{L}}(f_{\phi},\mathcal{S}_{\Tilde{e}})}{\partial\phi},
    \label{eq:local}
\end{equation}

where $\eta$ is the inner loop learning rate. By Eq.~\ref{eq:local}, we simulate the adaption for new entities. Updating from an optimal global parameter $\phi$, the model is fine-tuned into entity-specific for new entities. Thus, it is crucial to design a meta-learning strategy to learn an optimal global parameter $\phi$ with a robust generalization ability over time. 

\noindent \textbf{Outer optimization}: Since new entities evolve over time, it is not only $\mathcal{Q}_{\Tilde{e}}$ but also each part of $\mathcal{Q}_{\Tilde{e}}$ that follows different distributions. Thus, instead of optimizing the meta-learner on the whole query set $\mathcal{Q}_{\Tilde{e}}$ at once by Eq.~\ref{eq:MAML}, we first adapt entity-specific parameters $\phi_{\Tilde{e}}$ to recent facts, then gradually increase the difficulties by autoregressively adapting it to farther away future facts. Through this process, we are able to guide and stablize the training of global parameters via a temporal signal. Specifically, we split the time span of query sets into $M$ time intervals, where each new entity has a sequence of query set corresponding to different time intervals $\mathcal{Q}_{\Tilde{e}} = \{\mathcal{Q}_{\Tilde{e}}^{(1)}, \mathcal{Q}_{\Tilde{e}}^{(2)}, \cdots, \mathcal{Q}_{\Tilde{e}}^{(M)}\}$. We first adapt and optimize $\phi_{\Tilde{e}}$ for each new entity on $\mathcal{Q}_{\Tilde{e}}^{(1)}$ for $\phi^{(1)}$ encoding global knowledge for predictions in the $1$-st time interval. Then we gradually adapt each $\phi_{\Tilde{e}}$ fine-tuned from last time interval on farther away intervals until $\phi^{(M)}$ is learned, which maintain temporal robustness from time interval $1$ to $M$.

We then discuss how to measure the feedback (predictive loss) from each adaptation. To simulate real scenarios, we assume query sets to follow different and unseen distributions. Taking adaptation from $m$-th time interval to $m+1$-th time interval as example, we view $p(\phi^{(m)})$ as prior parameter distribution and aim to learn the posterior parameter distribution $q(\phi^{(m+1)})$ conditioning on query set in $m+1$-th interval. The unseen query set distribution in $m+1$-th interval prevent us to estimate $q(\phi^{(m+1)})$ by either using unbiased empirical loss or Bayes rules. To resolve this, we instead adopt the PAC-Bayes method~\cite{PAC_Bayes}, which allows one to  learn a parameter posterior that fits the observations without knowing the observations distribution. We propose the following theorem to relate real predictive loss in new time interval with its empirical verson:

\begin{theorem}[{\bf PAC-Bayes Generalization Bound on Temporal Adaptation}]
Let $\mathcal{D}^{(m+1)} = \bigcup_{\Tilde{e}\in\Tilde{\mathcal{E}}}\mathcal{Q}_{\Tilde{e}}^{(m+1)}$ denote all query sets in $m+1$-th time interval of all new entities from $\Tilde{\mathcal{E}}$. For any $\delta\in(0,1)$ and learned parameter distribution $p(\phi^{(m)})$ in last time interval, with probability at least $1-\delta$ on $\mathcal{D}^{(m+1)}$:
\label{the:PAC}
\begin{equation}
    \small
    \mathcal{L}(f_{\phi^{(m)}}, \mathcal{D}^{(m+1)}) \leq \hat{\mathcal{L}}(f_{\phi^{(m)}}, \mathcal{D}^{(m+1)}) + \sqrt{\frac{\mathbb{KL}(q(\phi^{(m+1)})\|p(\phi^{(m)})) + \log\frac{|\mathcal{D}^{(m+1)}|}{\delta}}{2|\mathcal{D}^{(m+1)}| - 1}},
\end{equation}
where $ \mathcal{L}(f_{\phi^{(m)}}, \mathcal{D}^{(m+1)}) = \mathbb{E}_{\Tilde{e}\sim p(\Tilde{\mathcal{E}})}\left[ \mathcal{L}(f_{\phi_{\Tilde{e}}^{(m)}}, \mathcal{Q}^{(m+1)}_{\Tilde{e}})\right]$ denotes real predictive loss on new time interval with new and unknown data distribution, $\hat{ \mathcal{L}}(f_{\phi^{(m)}}, \mathcal{D}^{(m+1)})$ denote the empirical version of the loss, and $\mathbb{KL}(\cdot\|\cdot)$ is the Kullback-Leibler divergence.
\end{theorem}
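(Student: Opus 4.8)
The plan is to recognize Theorem~\ref{the:PAC} as an instance of the classical McAllester-style PAC-Bayes bound, specialized to the ``dataset'' $\mathcal{D}^{(m+1)}$ of query quadruples and the hypothesis family indexed by the encoder parameters $\phi$. First I would set up the abstract ingredients: a sample space consisting of the future facts in $\mathcal{D}^{(m+1)}$, which are drawn (per new entity $\Tilde{e}\sim p(\Tilde{\mathcal{E}})$, then per query quadruple) from the unknown time-shifted distribution; a hypothesis space identified with parameter vectors $\phi$; and a bounded loss, namely the per-quadruple hinge loss from Eq.~\ref{eq:hinge} rescaled into $[0,1]$, so that $\mathcal{L}(f_{\phi^{(m)}},\mathcal{D}^{(m+1)})$ is its true expectation and $\hat{\mathcal{L}}(f_{\phi^{(m)}},\mathcal{D}^{(m+1)})$ its empirical average over the $|\mathcal{D}^{(m+1)}|$ query facts. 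The prior is $p(\phi^{(m)})$ (the parameter distribution carried over from the $m$-th interval, fixed \emph{before} seeing the $(m{+}1)$-th interval data, which is exactly what PAC-Bayes requires of a prior), and the posterior is $q(\phi^{(m+1)})$ obtained after adapting on the new interval.

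Next I would invoke the standard PAC-Bayes theorem: for a bounded loss and any posterior $q$, with probability at least $1-\delta$ over the draw of $n$ i.i.d.\ samples,
\begin{equation}
\small
\mathbb{E}_{\phi\sim q}[\mathcal{L}(f_\phi)] \le \mathbb{E}_{\phi\sim q}[\hat{\mathcal{L}}(f_\phi)] + \sqrt{\frac{\mathbb{KL}(q\|p) + \log\frac{n}{\delta}}{2n-1}},
\end{equation}
with $n = |\mathcal{D}^{(m+1)}|$. I would then translate the left- and right-hand sides into the quantities appearing in the statement: the expectation over $\Tilde{e}\sim p(\Tilde{\mathcal{E}})$ of the per-entity losses $\mathcal{L}(f_{\phi_{\Tilde{e}}^{(m)}},\mathcal{Q}_{\Tilde{e}}^{(m+1)})$ is precisely the true risk of the loss functional on $\mathcal{D}^{(m+1)}$ once we fold the entity draw and the within-entity quadruple draw into a single sampling of query facts; the empirical counterpart $\hat{\mathcal{L}}(f_{\phi^{(m)}},\mathcal{D}^{(m+1)})$ is the matching empirical average; and the square-root term is copied verbatim with $\mathbb{KL}(q(\phi^{(m+1)})\|p(\phi^{(m)}))$ and $n=|\mathcal{D}^{(m+1)}|$. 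A short remark would note that the bound as displayed suppresses the outer $\mathbb{E}_{\phi\sim q}$ (i.e.\ it is stated for a representative / point-estimate parameter $\phi^{(m)}$), consistent with the rest of the paper's notation, and that this is the form subsequently used to define the regularizer in Eq.~\ref{eq:reg}.

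The main obstacle — and the place I would be most careful — is the independence/exchangeability assumption underlying the concentration step. Quadruples sharing a new entity, or sharing temporal neighbors, are not literally independent, and the query facts are time-ordered within $(T,T^\prime)$. I would address this by adopting the paper's generative viewpoint (each task = a new entity sampled from $p(\Tilde{\mathcal{E}})$, each query fact sampled from that entity's future-fact distribution), treating $\mathcal{D}^{(m+1)}$ as the resulting i.i.d.\ sample of size $|\mathcal{D}^{(m+1)}|$ conditioned on the fixed support sets and the prior $p(\phi^{(m)})$; under this modeling assumption the classical bound applies directly. A secondary point to handle is boundedness of the loss: the hinge loss in Eq.~\ref{eq:hinge} is a sum over negative samples and is not a priori in $[0,1]$, so I would either fix a single negative per positive and normalize by $\gamma$ plus the (bounded) score range, or simply assume the loss is bounded by a constant and absorb it, noting the bound then holds up to that constant factor. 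Everything else is a direct substitution into the cited PAC-Bayes inequality~\cite{PAC_Bayes}.
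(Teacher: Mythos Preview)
Your proposal is correct in spirit but takes a different route from the paper. You treat the theorem as a black-box instantiation of the McAllester PAC-Bayes bound: identify prior, posterior, bounded per-sample loss, and sample size, then cite the standard inequality. The paper instead \emph{re-derives} the bound from first principles in the appendix: it defines $\Delta\mathcal{L} = \mathcal{L} - \hat{\mathcal{L}}$, constructs the auxiliary quantity $2(|\mathcal{D}^{(m+1)}|-1)\mathbb{E}_{\phi\sim q}[(\Delta\mathcal{L})^2] - \mathbb{KL}(q\|p)$, applies Markov's inequality to its exponential, uses the Donsker--Varadhan change of measure (via Jensen) to pass from $q$ to $p$, swaps the order of the $\phi$- and data-expectations (using that the prior $p(\phi^{(m)})$ is fixed before seeing $\mathcal{D}^{(m+1)}$), invokes Hoeffding's inequality to bound the inner moment-generating term by $|\mathcal{D}^{(m+1)}|$, and finishes with one more application of Jensen. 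What your approach buys is modularity and honesty about the hypotheses: you flag the i.i.d.\ and bounded-loss assumptions explicitly, neither of which the paper's derivation discusses. What the paper's approach buys is self-containment --- the reader sees exactly where each constant ($2|\mathcal{D}^{(m+1)}|-1$, $\log(|\mathcal{D}^{(m+1)}|/\delta)$) comes from --- at the cost of reproducing a now-standard argument. Both arrive at the same inequality; neither addresses the dependence structure among query quadruples more rigorously than the other.
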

Readers can refer to Appendix~\ref{ap:PAC} for proof. Thus, using Theorem~\ref{the:PAC}, we can adapt $\phi^{(m)}$ to $m+1$ time interval and estimate the feedback (predictive loss) for our meta-learning framework via the following {\em temporal adaptation regularizer}:
\vspace{-1mm}
\begin{equation}
    \small
    \mathcal{R}(f_{\phi^{(m)}}; \mathcal{D}^{(m+1)}) \triangleq  \hat{\mathcal{L}}(f_{\phi^{(m)}}, \mathcal{D}^{(m+1)}) + \sqrt{\frac{\mathbb{KL}(q(\phi^{(m+1)})\|p(\phi^{(m)})) + \log\frac{|\mathcal{D}^{(m+1)}|}{\delta}}{2|\mathcal{D}^{(m+1)}| - 1}}.
    \label{eq:reg}
\end{equation}
\noindent\textbf{Remark}. To interpret the temporal adaptation regularizer $\mathcal{R}(f_{\phi^{(m)}}; \mathcal{D}^{(m+1)})$, it can be viewed as a combination of empirical risk on the query set in the new time interval as well as a regularizer of parameter distribution across time intervals in form of KL-divergence. The regularizer along time domain can guarantee that global knowledge $\phi$ is trained by predictive losses from all time intervals without overfitting in specific time interval. Thus, we can improve the generalization ability of our meta-learner over time by the following update step by step, from $m=1$ to $m=M$:
\vspace{-1mm}
\begin{equation}
    \small
    \phi^{(m+1)} = \phi^{(m)} -  \beta\frac{\partial\mathcal{R}(f_{\phi^{(m)}};\mathcal{D}^{(m+1)})}{\partial\phi},
    \label{eq:global}
\end{equation}

\noindent where $\phi^{(0)}$ can be obtained by training on existing entities. The meta-training phase of \model is summarized in Algorithm~\ref{al:MetaTKGR}. During meta-test phase, given a new entity $\Tilde{e}$, we can first fine-tune $\phi$ on few-shot links, and then utilize $f_{\phi_{\Tilde{e}}}$ to represent $\Tilde{e}$ for future predictions.
\vspace{-3mm}
\section{Experiments}
\label{experiment}
\vspace{-3mm}
\noindent\textbf{Datasets.}
We evaluate the proposed \model framework on three public TKGs, where \textbf{YAGO}~\cite{YAGO} and \textbf{WIKI}~\cite{WIKI} stores time-varying facts and \textbf{ICEWS18}~\cite{ICEWS18} is event-centric. They are collected from different time ranges and have different time units, which can validate our framework in the context of various types of evolution. {\em Notably, most new entities appear with few-shot facts,} validating the practical value of our proposed setting. Table~\ref{tb:data} shows the detailed statistics. We report detailed dataset descriptions as well as entity links distribution in the Appendix~\ref{ap:datasets}. 

\begin{wraptable}{R}{0.5\linewidth}
\vspace{-3mm}
\caption{Dataset statistics.}
\label{tb:data}
\scriptsize
\resizebox{0.5\columnwidth}{!}{
\fontsize{8.5}{10}\selectfont
 \begin{tabular}{c|cccc}
\toprule
\textbf{Datasets} & \textbf{\# Entities} & \textbf{\# Relations} & \multicolumn{1}{l}{\textbf{\# Quadruples}} & \textbf{Time Unit} \\ \hline
\textbf{YAGO} & 10,623 & 10 & 201,090 & 1 year \\
\textbf{WIKI} & 12,554 & 24 & 669,935 & 1 year \\
\textbf{ICEWS18} & 23,033 & 256 & 281,205 & 1 day \\
\bottomrule
\end{tabular}}
\vspace{-3mm}
\end{wraptable}

\noindent\textbf{Baselines.} We compare nine state-of-the-art baselines from four related areas: 1) \textbf{TransE}~\cite{TransE}, 2) \textbf{TransR}~\cite{TransR}, 3) \textbf{RotatE}~\cite{RotatE}: Translation distance based embedding methods for static knowledge graphs; 4) \textbf{RE-NET}~\cite{Renet}, 5) \textbf{RE-GCN}~\cite{RE-GCN}: Temporal knowledge graph embedding methods; 6) \textbf{LAN}~\cite{LAN}; 7) \textbf{I-GEN}~\cite{GEN}; 8) \textbf{T-GEN}~\cite{GEN}: Few-shot methods on static knowledge graph; 9)\textbf{MetaDyGNN}~\cite{MetaDyGcn}: A meta-learning framework for few-shot link prediction on homogeneous graphs. We describe the baselines in detail in Appendix~\ref{ap:baselines}.
 
\noindent\textbf{Experimental Setup.}
Given the temporal knowledge graph, we first split the time duration into four with a ratio of 0.4:0.25:0.1:0.25 chronologically, then we collect the entities that firstly appear in each period as background/meta-training/meta-validation/meta-test entity set. We train our model on background set as initialization and simulate few-shot tasks on meta-training set. To construct few-shot tasks, we assume the first $K=1,2,3$ quadruples of new entities are known and report the performance of the remaining quadruples in the future. For fair comparison, we keep the dimension of all embeddings as $128$, and utilize pre-trained $1$-shot TransE embeddings for initialization for models if applicable. We report detailed experimental setup, especially of \model, in the Appendix~\ref{ap:setup}.

\noindent\textbf{Evaluation Protocol and Metrics.} For each prediction $(\Tilde{e}, r, ?, t)$ or $(?, r, \Tilde{e}, t)$, we use ranking scheme to evaluate the performance. Specifically, we rank all entities at the missing position in quadruples, and adopt mean reciprocal rank ({\em MRR}) and Hits at \{1,3,10\} ({\em H@\{1,3,10\}}) as evaluation metrics. It is worth noting that we measure the ranks in a filtered setting, where we filter other true quadruples existing in datasets, following~\cite{TransR,RotatE}. 

\begin{table}[t]
\caption{The results of 3-shot temporal knowledge graph reasoning. Average results on $5$ independent runs are reported. $*$ indicates the statistically significant improvements over the best baseline, with $p$-value smaller than $0.001$. We report standard deviation in Appendix~\ref{ap:std_results}}
\label{tb:3}
\scriptsize
\centering
\resizebox{0.9\columnwidth}{!}{
\fontsize{8.5}{11}\selectfont
\begin{tabular}{c|cccc|cccc|cccc}
\toprule
\multirow{2}{*}{\textbf{Models}} & \multicolumn{4}{c|}{\textbf{YAGO}} & \multicolumn{4}{c|}{\textbf{WIKI}} & \multicolumn{4}{c}{\textbf{ICEWS18}} \\ \cline{2-13} 
 & \textbf{MRR} & \textbf{H@1} & \textbf{H@3} & \textbf{H@10} & \textbf{MRR} & \textbf{H@1} & \textbf{H@3} & \textbf{H@10} & \textbf{MRR} & \textbf{H@1} & \textbf{H@3} & \textbf{H@10} \\ \hline
\textbf{TransE} & 0.223 & 0.158 & 0.242 & 0.360 & 0.161 & 0.111 & 0.171 & 0.236 & 0.058 & 0.052 & 0.062 & 0.098 \\
\textbf{TransR} & 0.234 & 0.165 & 0.259 & 0.382 & 0.183 & 0.138 & 0.188 & 0.245 & 0.061 & 0.062 & 0.073 & 0.109 \\
\textbf{RotatE} & 0.241 & 0.182 & 0.278 & 0.409 & 0.232 & 0.171 & 0.223 & 0.284 & 0.078 & 0.074 & 0.082 & 0.128 \\ \hline
\textbf{RE-NET} & 0.261 & 0.210 & 0.298 & 0.410 & 0.261 & 0.210 & 0.251 & 0.331 & 0.232 & 0.139 & 0.241 & 0.369 \\ 
\textbf{RE-GCN} & 0.283 & 0.226 & 0.307 & 0.421 & 0.277 & 0.223 & 0.262 & 0.344 & 0.233 & 0.142 & 0.238 & 0.350 \\
\hline
\textbf{LAN} & 0.230 & 0.154 & 0.247 & 0.352 & 0.185 & 0.133 & 0.201 & 0.287 & 0.207 & 0.119 & 0.234 & 0.321 \\
\textbf{I-GEN} & 0.303 & 0.238 & 0.323 & 0.420 & 0.221 & 0.179 & 0.229 & 0.264 & 0.212 & 0.120 & 0.251 & 0.346 \\
\textbf{T-GEN} & 0.292 & 0.218 & 0.310 & 0.394 & 0.234 & 0.185 & 0.222 & 0.271 & 0.169 & 0.122 & 0.184 & 0.265 \\ \hline
\textbf{MetaDyGNN} & 0.350 & 0.270 & 0.379 & 0.511 & 0.309 & 0.238 & 0.309 & 0.459 & 0.307 & 0.216 & 0.309 & 0.469 \\ \hline
\textbf{\model} & \textbf{0.370*} & \textbf{0.303*} & \textbf{0.416*} & \textbf{0.558*} & \textbf{0.329*} & \textbf{0.253*} & \textbf{0.335*} & \textbf{0.489*} & \textbf{0.335*} & \textbf{0.249*} & \textbf{0.340*} & \textbf{0.527*} \\
\textbf{Gains (\%)} & \textit{5.68} & \textit{12.21} & \multicolumn{1}{l}{\textit{9.80}} & \textit{9.19} & \textit{6.26} & \textit{6.38} & \multicolumn{1}{l}{\textit{8.47}} & \textit{6.35} & \textit{9.11} & \textit{14.85} & \multicolumn{1}{l}{\textit{9.89}} & \textit{12.4} \\ 
\bottomrule
\end{tabular}}
\vspace{-6mm}
\end{table}

\begin{table}[t]
\caption{The results of 1-shot and 2-shot experiment. We report complete results in Appendix~\ref{ap:std_results}.}
\label{tb:12}
\scriptsize
\centering
\resizebox{0.9\columnwidth}{!}{
\fontsize{8.5}{11}\selectfont
\begin{tabular}{c|cclc|cclc|cclc}
\toprule
 & \multicolumn{4}{c|}{\textbf{YAGO}} & \multicolumn{4}{c|}{\textbf{WIKI}} & \multicolumn{4}{c}{\textbf{ICEWS18}} \\ \cline{2-13} 
 & \multicolumn{2}{c}{\textbf{1-shot}} & \multicolumn{2}{c|}{\textbf{2-shot}} & \multicolumn{2}{c}{\textbf{1-shot}} & \multicolumn{2}{c|}{\textbf{2-shot}} & \multicolumn{2}{c}{\textbf{1-shot}} & \multicolumn{2}{c}{\textbf{2-shot}} \\
\multirow{-3}{*}{\textbf{Models}} & \textbf{MRR} & \textbf{H@10} & \textbf{MRR} & \textbf{H@10} & \textbf{MRR} & \textbf{H@10} & \textbf{MRR} & \textbf{H@10} & \textbf{MRR} & \textbf{H@10} & \textbf{MRR} & \textbf{H@10} \\ \hline
\textbf{TransE} & 0.183 & 0.268 & 0.193 & 0.304 & 0.144 & 0.186 & 0.146 & 0.213 & 0.049 & 0.077 & 0.058 & 0.086 \\
\textbf{TransR} & 0.189 & 0.270 & 0.198 & 0.312 & 0.160 & 0.183 & 0.160 & 0.225 & 0.050 & 0.080 & 0.060 & 0.090 \\
\textbf{RotatE} & 0.215 & 0.280 & 0.210 & 0.359 & 0.175 & 0.190 & 0.201 & 0.268 & 0.068 & 0.098 & 0.070 & 0.091 \\ \hline
\textbf{RE-NET} & 0.221 & 0.304 & 0.233 & 0.390 & 0.212 & 0.259 & 0.239 & 0.294 & 0.185 & 0.250 & 0.200 & 0.341 \\ 
\textbf{RE-GCN} & 0.233 & 0.320 & 0.241 & 0.407 & 0.223 & 0.250 & 0.247 & 0.310 & 0.193 & 0.247 & 0.205 & 0.347 \\
\hline
\textbf{LAN} & 0.196 & 0.269 & 0.200 & 0.310 & 0.174 & 0.275 & 0.162 & 0.273 & 0.170 & 0.301 & 0.188 & 0.317  \\
\textbf{I-GEN} & 0.238 & 0.321 & 0.237 & 0.402 & 0.181 & 0.241 & 0.223 & 0.287 & 0.199 & 0.320 & 0.177 & 0.337 \\
\textbf{T-GEN} & 0.247 & 0.331 & 0.260 & 0.379 & {\color[HTML]{330001} 0.202} & {\color[HTML]{330001} 0.245} & 0.240 & 0.319 & 0.131 & 0.262 & 0.161 & 0.259 \\ \hline
\textbf{MetaDyGNN} & 0.269 & 0.396 & 0.316 & 0.496 & 0.241 & 0.371 & 0.271 & 0.390 & 0.249 & 0.420 & 0.269 & 0.441 \\ \hline
\textbf{\model} & \textbf{0.294*} & \textbf{0.428*} & \textbf{0.356*} & \textbf{0.526*} & \textbf{0.277*} & \textbf{0.419*} & \textbf{0.309*} & \textbf{0.441*} & \textbf{0.295*} & \textbf{0.496*} & \textbf{0.301*} & \textbf{0.500*} \\
\textbf{Gains (\%)} & \textit{9.43} & \textit{8.04} & \textit{12.69} & \textit{6.14} & \textit{14.64} & \textit{12.93} & \textit{14.04} & \textit{13.20} & \textit{18.45} & \textit{17.87} & \textit{11.47} & \textit{13.39} \\
\bottomrule
\end{tabular}}
\vspace{-6mm}
\end{table}

\noindent\textbf{Main Results.}
Table~\ref{tb:3} and Table~\ref{tb:12} report overall result for $K=3$ and $K=1,2$ few-shot experiments respectively. On average, \model achieves $11.4\%$ relative improvement over best baseline model, demonstrating the superiority of \model in modeling new entity evolution in low-data regime. Baselines ({\em TransE, TransR, RotatE, RE-NET}) that do not optimize for new entities perform poorly on all datasets, although they are trained on both existing entities and new entities with few-shot links. {\em LAN, I-GEN, T-GEN} also produce unsatisfying results, despite the fact that they propose special designs to represent new entities. Note that they have worse performance in some cases than RE-NET that does not optimize for new entities, as they ignore the temporal information and perform poorly in future predictions. Compared with MetaDyGNN, \model shows consistently better performance, because our temporal encoder can handle multi-relational graphs in low-data regimes better, and our temporal meta-learning framework can produce more robust predictions.

\begin{wrapfigure}{R}{0.7\linewidth}
\centering
    \includegraphics[width = 0.495 \linewidth, height = 0.25 \linewidth]{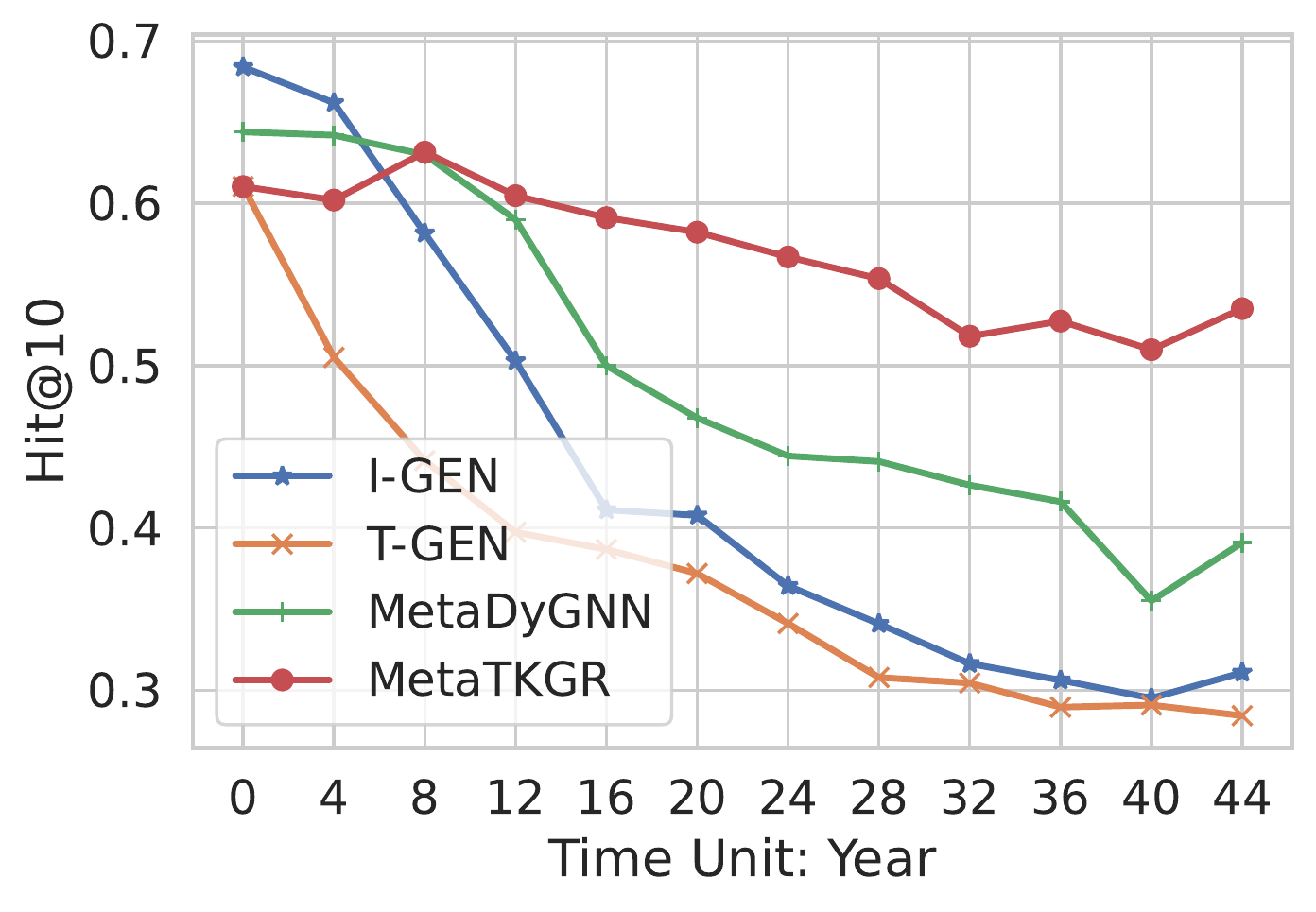}
    \includegraphics[width = 0.495 \linewidth, height = 0.25 \linewidth]{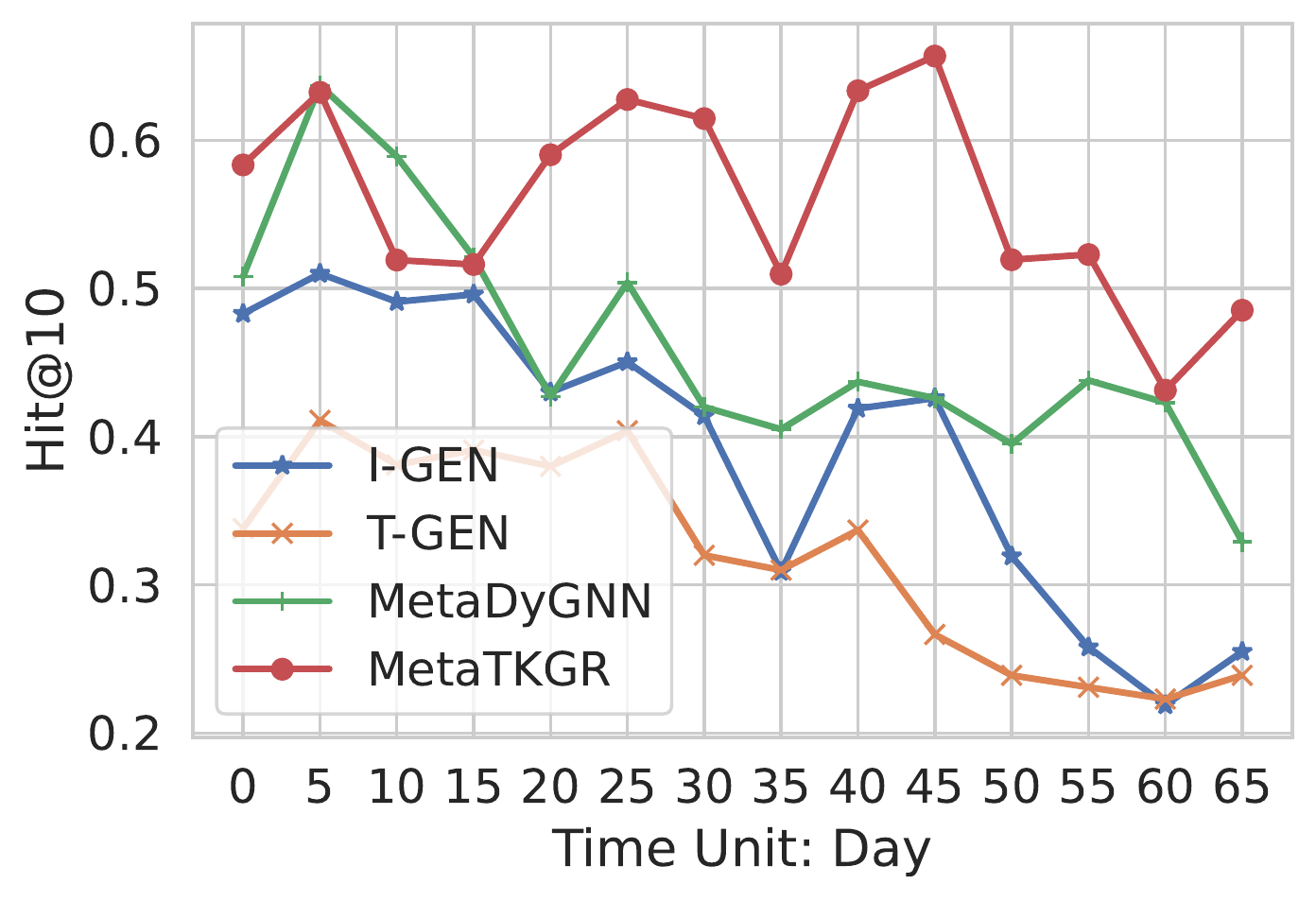}
    \caption{Performance of YAGO (left) and ICEWS18 (right) over time.}
    \label{fig:time}
    \vspace{-3mm}
\end{wrapfigure}

\noindent \textbf{Performance of Prediction over Time}. Next, we study the performance of \model over time. Figure~\ref{fig:time} shows the performance comparisons of $3$-shot predictions over different timestamps on the {\em YAGO} and {\em ICEWS18} datasets, measured by filtered $H@10$ metrics. \model consistently beats all strong baselines. Although the compared baselines can optimize newly emerging entities, they perform poorly in wide time intervals because they largely ignore the temporal information and the distribution discrepancy caused by evolution. We notice that the relative gains of our model get more significant with increasing time steps. It illustrates that our temporal meta-learning framework can improve the generalization ability over time. Performance on ICEWS18 fluctuates more severely. This is expected since the evolution of ICEWS18 is not stable due to the much shorter time unit (day).

\begin{figure}[h]
\vspace{-3mm}
\centering
\includegraphics[width = 0.9\linewidth]{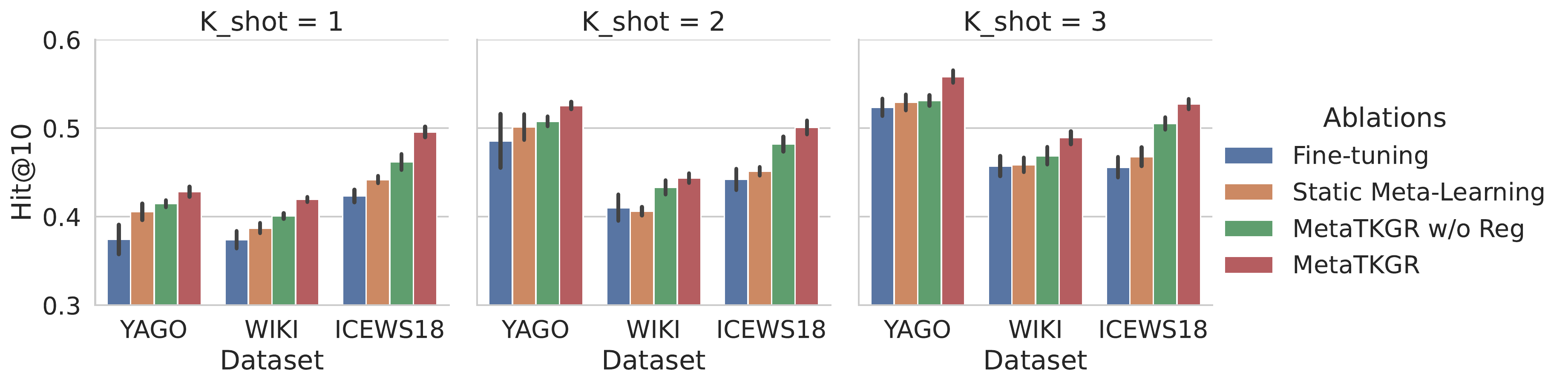}
\caption{Ablation Studies, evaluated by filtered $Hit@10$.}
\label{fig:ablation}
\vspace{-3mm}
\end{figure}

\noindent \textbf{Ablation Study}. We evaluate performance improvements brought by the temporal meta-learning framework by following ablations: 1) {\em \underline{Fine-Tuning}} is trained on existing entities without using any meta-learning strategy, and then fine-tuned on new entities; 2) {\em \underline{Static Meta-Learning}} utilizes conventional MAML to train models; 3) {\em \underline{\model w/o Regularizer}} meta-trains model parameters on each time interval step by step, without explicitly optimizing the temporal adaptation regularizer. Figure~\ref{fig:ablation} reports the results measured by filtered $H@10$. We can conclude that modeling the distribution discrepancy caused by the temporal evolution and optimizing the temporal adaptation regularizer can improve the performance in the future prediction. Also, simply fine-tuning the model parameters on new entities leads to poor results, as during background phase, the model does not extract generalized knowledge that can be easily adapted to new entities.

\begin{figure}[H]
\vspace{-3mm}
\begin{minipage}{.5\linewidth}
\resizebox{1.05\textwidth}{!}{
\begin{tabular}{c|ccc|ccc}
\toprule
\textbf{} & \multicolumn{3}{c|}{\textbf{1-Shot (Training)}} & \multicolumn{3}{c}{\textbf{3-Shot (Training)}} \\ \cline{2-7} 
\textbf{Test} & \textbf{MRR} & \textbf{H@1} & \textbf{H@10} & \textbf{MRR} & \textbf{H@1} & \textbf{H@10} \\ \hline
\textbf{1-S} & 0.294 & 0.240 & 0.428 & 0.281 & 0.242 & 0.403 \\
\textbf{3-S} & 0.354 & 0.297 & 0.540 & 0.370 & 0.303 & 0.558 \\
\textbf{5-S} & 0.377 & 0.360 & 0.569 & 0.389 & 0.369 & 0.591 \\
\textbf{R-S} & 0.358 & 0.304 & 0.542 & 0.377 & 0.316 & 0.570 \\ 
\bottomrule
\end{tabular}}
\captionof{table}{Cross-shot results on YAGO.}
\label{fig:cross}
\end{minipage}
\hfill
\begin{minipage}{0.4\linewidth}
 \centerline{\includegraphics[width=0.8\linewidth]{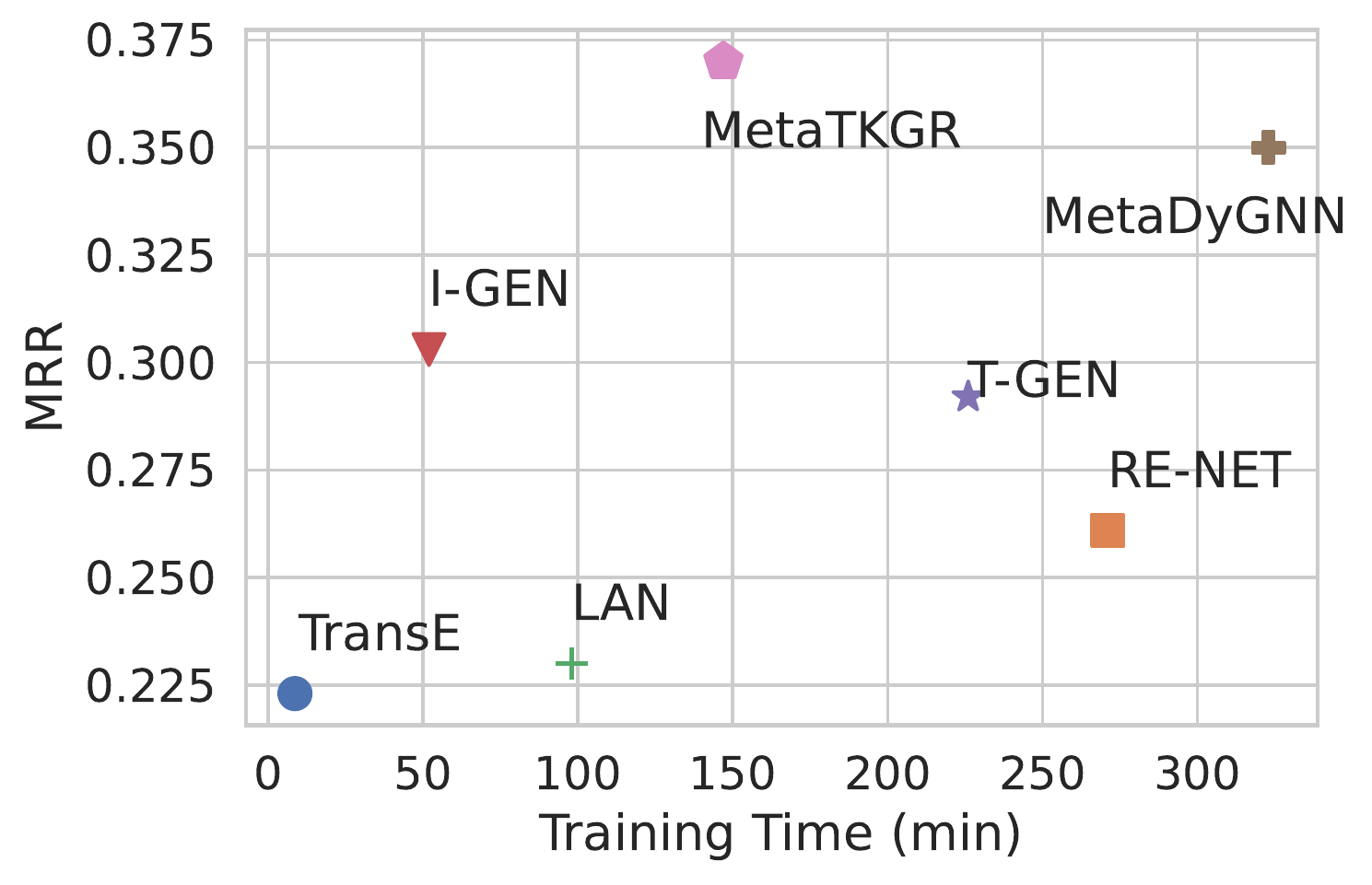}}
 \caption{MRR over training time.}
  \label{fig:efficiency}
\end{minipage}
\label{zrotate}
\vspace{-5mm}
\end{figure}



\noindent \textbf{Cross-shot Learning}. In reality, new entities are usually associated with various numbers of facts initially. Thus, the robustness of models on different numbers of shots during testing phase is critical. To simulate such scenario, we evaluate \model by varying the number of shots including 1-, 3-, 5-, and random-shot (R-S: between 1 and 5) during meta-training and meta-test phases. Table~\ref{fig:cross} reports the cross-shot learning results. As expected, with the increase of observable shots during testing phase, the performance becomes better. The differences in the number of shots used for training do not significantly affect the results, demonstrating that \model trained with a fixed number of shots performs robustly under the various number of shots during testing.

\noindent \textbf{Efficiency Analysis}. We train \model and baseline models from scratch on both existing entities and new entities and compare the training time, for the 3-shot experiment on YAGO. Figure~\ref{fig:efficiency} shows that \model significantly outperforms baseline models with reasonable training time. Compared with slow temporal models {\em RE-NET, MetaDyGNN} for knowledge graph reasoning, \model is more efficient because 1) our temporal encoder can learn temporal entity embeddings via sampled temporal neighbors at each continuous timestamp without using RNNs; 2) our temporal neighborhood sampler can prevent the size of multi-hop neighbors from increasing exponentially.

\vspace{-3mm}
\section{Related Work}
\label{related}
\vspace{-3mm}
\noindent {\bf Few-shot Learning on Knowledge Graph}. To alleviate data scarcity, various advanced techniques have emerged, such as transfer learning~\cite{pan2009survey,li2020learn}, semi-supervised learning~\cite{chapelle2009semi}, domain adaptation~\cite{li2017end,li2018hierarchical,li2019exploiting,li2019sal}  and few-shot learning (FSL)~\cite{FSL,juan2022disentangle}. Few-shot learning aims at learning generalized knowledge from existing tasks to extract transferable priors for new tasks with few labeled samples, including metric learning based approaches~\cite{MachingNet,ProtoMaching} and meta-learning based approaches~\cite{MAML,Meta-optimization1,Meta-optimization2,li2021metats}. For knowledge graph reasoning, the success of few-shot learning has facilitated learning the representations of scarce entities ~\cite{MEAN,LAN,FAAN,GEN} and scarce relations~\cite{FSRL,GMatching,MetaR,Edge-ML} respectively. In this paper, we are primarily interested in works to represent entities with few-shot facts, as relation set is relatively stable along time~\cite{NewKG}. Given a new entity linked with few-shot facts, the neighborhood aggregator used in graph neural networks (GNNs) reduces the inductive bias via structural knowledge~\cite{gcn,graphsage,R-GCN}. \cite{MEAN} computes the representations of few-shot entities by GNN-based neighboring aggregation scheme. \cite{LAN,FAAN,Edge-ML} further extend it by utilizing attention mechanisms. However, those models are usually trained on both many-shot and few-shot entities, ending up being suboptimal for few-shot entities. To address it, a recent work~\cite{GEN} explores how to mate-train the neighborhood aggregator so as to effectively adapt global knowledge for few-shot knowledge reasoning. Deep graph learning has been attracting enormous interest recently~\cite{gcn,graphsage,acegan,bright,hypergraph,oversmoothing1,polarization,DyKGA,infovgae}. More broadly on graphs, Meta-Graph~\cite{MetaGraph}, G-Meta~\cite{G-Meta} and MetaDyGNN~\cite{MetaDyGcn} are proposed to utilize meta-learning for link prediction across multiple graphs, on static/temporal homogeneous graphs. However, most works are designed for few-shot learning on static (knowledge) graphs. How to aggregate neighbors considering time factors and optimize for long-term performance are unsolved. MetaDyGNN~\cite{MetaDyGcn} is a recent framework combining MAML with TGAT~\cite{TGAT} on temporal homogeneous graph. But it did not consider distribution difference between few-shot facts and future facts caused by the evolution of new entities either, leading to increasingly poor performance over time, especially on temporal knowledge graphs with more complicated structures.

\noindent {\bf Temporal Knowledge Graph Reasoning}. 
Temporal knowledge graphs (TKGs) store time-varying facts in the real-world. Temporal knowledge graph reasoning aims to predict missing facts at a certain time in the future. It is mostly formulated as measuring the correctness of factual samples and negative samples by specially designed score functions~\cite{TransE,TransR,ComplEX,RotatE,TA-DistMult}. Compared with static KG reasoning tasks~\cite{SS-AGA}, the main challenge lies in how to incorporate time information into the representation process. Several embedding-based methods have been proposed. They encode time-dependent information of entities and relations by decoupling embeddings into static component and time-varying component~\cite{TKGC_TE,GoelAAAI2020}, utilizing recurrent neural networks (RNNs) to adaptively learn the dynamic evolution from historical fact sequence~\cite{Renet,TKGC-RNN}, or learning a sequence of evolving representations from discrete knowledge graph snapshots~\cite{TKGC-ODE,li2021temporal,Renet}. However, all of the existing temporal KG reasoning models aim to extrapolate future facts among existing entities, and how to predict future facts specifically for new emerging entities is largely under-explored.

\vspace{-3mm}
\section{Conclusion}
\vspace{-3mm}
We study a realistic but underexplored few-shot temporal knowledge graph reasoning problem, which aims at predicting future facts for newly emerging entities with a few facts. To this end, we propose a novel Meta Temporal Knowledge Graph Reasoning framework \model. It meta-learns the global knowledge of sampling and aggregating temporal neighbors, which can be adapted quickly to new entities for future prediction. Such procedure is gradually guided by the performance on predicting future facts, from near time intervals to far away ones.  We further theoretically analyze and propose a temporal adaptation regularizer to stabilize and generalize the learned knowledge on future tasks. We empirically validate the effectiveness of MetaTKGR on three real-world temporal knowledge graphs, on which the proposed framework significantly outperforms an extensive set of SOTA baselines. 
\begin{ack}
The authors would like to thank Tianshi Wang, Yuchen Yan, Jialu Wang for their helpful comments and discussion. The authors would also like to thank the anonymous reviewers of NeurIPS for valuable comments and suggestions. Research reported in this paper was sponsored in part by DARPA award HR001121C0165, DARPA award HR00112290105, Basic Research Office award HQ00342110002, and the Army Research Laboratory under Cooperative Agreement W911NF-17-20196.
\end{ack}
\bibliographystyle{plain}
\bibliography{neurips_2022.bib}
\newpage
\appendix
\section{Appendix}

The supplementary material is structured as follows:
\begin{itemize}
    \item Section~\ref{ap:PAC} gives the proof and analysis of Theorem 3.1;
    \item Section~\ref{ap:datasets} introduces the datasets and their statistics in detail;
    \item Section~\ref{ap:baselines} introduces the baselines utilized in experiments;
    \item Section~\ref{ap:setup} discusses the experimental setup of baseline models as well as MetaTKGR;
    \item Section~\ref{ap:std_results} reports detailed experiment performance with statistical test results;
\end{itemize}

\subsection{Statements, Proof and Analysis of Theorem 3.1}
\label{ap:PAC}



\begin{theorem}[{\bf PAC-Bayes Generalization Bound on Temporal Adaptation}]
Let $\mathcal{D}^{(m+1)} = \bigcup_{\Tilde{e}\in\Tilde{\mathcal{E}}}\mathcal{Q}_{\Tilde{e}}^{(m+1)}$ denote all query set in $m+1$-th time interval of all new entities from $\Tilde{\mathcal{E}}$. For any $\delta\in(0,1)$ and learned parameter distribution $p(\phi^{(m)})$ in last time interval, with probability at least $1-\delta$ on $\mathcal{D}^{(m+1)}$:

\begin{equation}
    \mathcal{L}(f_{\phi^{(m)}}, \mathcal{D}^{(m+1)}) \leq \hat{\mathcal{L}}(f_{\phi^{(m)}}, \mathcal{D}^{(m+1)}) + \sqrt{\frac{\mathbb{KL}(q(\phi^{(m+1)})\|p(\phi^{(m)})) + \log\frac{|\mathcal{D}^{(m+1)}|}{\delta}}{2|\mathcal{D}^{(m+1)}| - 1}},
\end{equation}
where $ \mathcal{L}(f_{\phi^{(m)}}, \mathcal{D}^{(m+1)}) = \mathbb{E}_{\Tilde{e}\sim p(\Tilde{E})}\left[ \mathcal{L}(f_{\phi_{\Tilde{e}}^{(m)}}, \mathcal{Q}^{(m+1)}_{\Tilde{e}})\right]$ denotes real predictive loss on new time interval with new and unknown data distribution, $\hat{ \mathcal{L}}(f_{\phi^{(m)}}, \mathcal{D}^{(m+1)})$ denote the empirical version of the loss, and $\mathbb{KL}(\cdot\|\cdot)$ is the Kullback-Leibler divergence.
\end{theorem}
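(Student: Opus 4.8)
The plan is to instantiate the classical McAllester-style PAC-Bayes inequality, treating the $(m{+}1)$-th interval query data as the i.i.d.\ sample, the learned distribution $p(\phi^{(m)})$ as the (data-independent) prior, and $q(\phi^{(m+1)})$ as the data-dependent posterior. First I would fix a probabilistic model: write $n \triangleq |\mathcal{D}^{(m+1)}|$ and regard $\mathcal{D}^{(m+1)} = \bigcup_{\Tilde{e}\in\Tilde{\mathcal{E}}}\mathcal{Q}_{\Tilde{e}}^{(m+1)}$ as $n$ samples obtained by first drawing $\Tilde{e}\sim p(\Tilde{\mathcal{E}})$ and then a quadruple from its query set, so that $\hat{\mathcal{L}}(f_\phi,\mathcal{D}^{(m+1)})$ is an empirical mean with population counterpart $\mathbb{E}[\hat{\mathcal{L}}(f_\phi,\mathcal{D}^{(m+1)})] = \mathcal{L}(f_\phi,\mathcal{D}^{(m+1)})$. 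Since the per-sample loss entering the meta-objective (the margin/hinge loss of Eq.~\ref{eq:hinge}, equivalently a bounded ranking surrogate) can be rescaled to $[0,1]$, I would record that normalization as a standing assumption. Finally I would identify the posterior-averaged risk $\mathbb{E}_{\phi\sim q(\phi^{(m+1)})}[\mathcal{L}(f_\phi,\mathcal{D}^{(m+1)})]$ with $\mathcal{L}(f_{\phi^{(m)}},\mathcal{D}^{(m+1)}) = \mathbb{E}_{\Tilde{e}\sim p(\Tilde{\mathcal{E}})}[\mathcal{L}(f_{\phi^{(m)}_{\Tilde{e}}},\mathcal{Q}^{(m+1)}_{\Tilde{e}})]$ by taking $q(\phi^{(m+1)})$ to be the pushforward of $p(\Tilde{\mathcal{E}})$ under the inner-loop adaptation map $\Tilde{e}\mapsto\phi^{(m)}_{\Tilde{e}}$ applied with the $(m{+}1)$-th interval supports, and likewise for $\hat{\mathcal{L}}$.

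The engine of the proof is the Donsker--Varadhan change-of-measure inequality: for every measurable $h$ and every $q\ll p$, $\mathbb{E}_{\phi\sim q}[h(\phi)] \le \mathbb{KL}(q\|p) + \log\mathbb{E}_{\phi\sim p}[e^{h(\phi)}]$. I would apply this with $h(\phi) = (2n-1)\big(\mathcal{L}(f_\phi,\mathcal{D}^{(m+1)}) - \hat{\mathcal{L}}(f_\phi,\mathcal{D}^{(m+1)})\big)^2$. Because $p(\phi^{(m)})$ is fixed before the $(m{+}1)$-th interval is revealed (which is exactly what the autoregressive, interval-by-interval structure of Algorithm~\ref{al:MetaTKGR} buys us), Fubini gives $\mathbb{E}_{\mathcal{D}^{(m+1)}}\mathbb{E}_{\phi\sim p}[e^{h(\phi)}] = \mathbb{E}_{\phi\sim p}\mathbb{E}_{\mathcal{D}^{(m+1)}}[e^{h(\phi)}]$, and for each fixed $\phi$ the inner expectation is controlled by the standard sub-exponential moment estimate for the deviation of a $[0,1]$-bounded empirical mean (the concentration lemma underlying McAllester's theorem), namely $\mathbb{E}_{\mathcal{D}^{(m+1)}}[e^{(2n-1)(\mathcal{L}-\hat{\mathcal{L}})^2}] \le n$. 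Hence $\log\mathbb{E}_{\phi\sim p}\mathbb{E}_{\mathcal{D}^{(m+1)}}[e^{h(\phi)}] \le \log n$, and Markov's inequality applied to the nonnegative random variable $\mathbb{E}_{\phi\sim p}[e^{h(\phi)}]$ shows that with probability at least $1-\delta$ over $\mathcal{D}^{(m+1)}$ one has $\log\mathbb{E}_{\phi\sim p}[e^{h(\phi)}] \le \log(n/\delta)$.

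On that event I would chain the two bounds to get $\mathbb{E}_{\phi\sim q}[h(\phi)] \le \mathbb{KL}(q(\phi^{(m+1)})\|p(\phi^{(m)})) + \log(n/\delta)$, divide through by $2n-1$, and apply Jensen's inequality to the convex map $x\mapsto x^2$ to pull the $q$-expectation inside, obtaining $\big(\mathbb{E}_{\phi\sim q}[\mathcal{L}-\hat{\mathcal{L}}]\big)^2 \le \mathbb{E}_{\phi\sim q}[(\mathcal{L}-\hat{\mathcal{L}})^2] \le \frac{\mathbb{KL}(q(\phi^{(m+1)})\|p(\phi^{(m)})) + \log(n/\delta)}{2n-1}$. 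Taking square roots, rearranging, and substituting $n = |\mathcal{D}^{(m+1)}|$ together with the identifications $\mathbb{E}_{\phi\sim q}[\mathcal{L}] = \mathcal{L}(f_{\phi^{(m)}},\mathcal{D}^{(m+1)})$ and $\mathbb{E}_{\phi\sim q}[\hat{\mathcal{L}}] = \hat{\mathcal{L}}(f_{\phi^{(m)}},\mathcal{D}^{(m+1)})$ from the first paragraph yields precisely the claimed inequality.

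I expect the main obstacle to be conceptual bookkeeping rather than any hard estimate: one must set up the probability space so that the $(m{+}1)$-th interval query sets are genuine fresh draws while $\phi^{(m)}$ is measurable with respect to strictly earlier intervals (so the PAC-Bayes prior is data-independent), and one must fix $q(\phi^{(m+1)})$ precisely as the distribution over adapted parameters induced by entity sampling followed by inner-loop adaptation, so that the deterministic-looking left-hand side genuinely equals a posterior-averaged risk. A secondary point to handle with care is loss boundedness: the hinge loss is not bounded a priori, so the argument requires either a bounded-norm assumption on the learned embeddings or an explicit truncation of the score function. Once these are pinned down, the remainder is the textbook McAllester derivation outlined above.
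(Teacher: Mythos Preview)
Your proposal is correct and follows essentially the same McAllester-style PAC-Bayes derivation as the paper: both use the change-of-measure step (you cite Donsker--Varadhan explicitly, the paper obtains it via Jensen on $\log$), then the moment bound $\mathbb{E}_{\mathcal{D}}[e^{(2n-1)(\Delta\mathcal{L})^2}]\le n$ derived from Hoeffding-type concentration, then Markov's inequality, and finally Jensen to pull the $q$-expectation inside the square. Your discussion of the bookkeeping (data-independence of the prior, loss boundedness, the precise identification of $q(\phi^{(m+1)})$ with a pushforward under the inner-loop map) is in fact more careful than what the paper spells out.
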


\begin{proof} 
As a realistic scenario, $\mathcal{D}^{(m+1)}$ in the new time interval follow different distribution from those in the last time interval, i.e., $m$-th time interval, and such new distribution is unknown at advance. To improve the generalization ability over time, we gradually adapt model parameters learned from last time interval $\phi^{(m)}$ to next time interval, resulting in updated parameters $\phi^{(m+1)}$. Formally, we view $p(\phi^{(m)})$ as prior parameter distribution and aim to learn the posterior parameter distribution $q(\phi^{(m+1)})$ conditioning on $\mathcal{D}^{(m+1)}$. The unseen distribution of $\mathcal{D}^{(m+1)}$ prevents us from estimating $q(\phi^{(m+1)})$ by either using unbiased empirical loss or Bayes rules. Therefore, for convinience of discussion, we first define the difference of real predictive loss and its empirical extimation as follows:
\begin{equation}
    \Delta\mathcal{L} = \mathcal{L}(f_{\phi^{(m)}}, \mathcal{D}^{(m+1)}) - \hat{ \mathcal{L}}(f_{\phi^{(m)}}, \mathcal{D}^{(m+1)}).
\end{equation}

We are interested at the relation of $\Delta\mathcal{L}$ and the distribution discrenpency between $p(\phi^{(m)})$ and $p(\phi^{(m+1)})$. Towards this goal, following~\cite{PAC_Bayes,PAC_Bayes2}, we construct the following function:
\begin{equation}
    f(\mathcal{D}^{(m+1)}) = 2(|\mathcal{D}^{(m+1)}|-1)\mathbb{E}_{\phi\sim q(\phi^{(m+1)})} \left[(\Delta\mathcal{L})^2\right] - \mathbb{KL}(q(\phi^{(m+1)})\|p(\phi^{(m)})).
\end{equation}

Next, using Markov’s inequality, we have:
\begin{equation}
    p(f(\mathcal{D}^{(m+1)}) > \epsilon) = p(e^{f(\mathcal{D}^{(m+1)})} > e^{\epsilon}) \leq \frac{\mathbb{E}_{\Tilde{e}} \left[ e^{f(\mathcal{D}^{(m+1)})}\right]}{e^{\epsilon}},
    \label{eq:Markov}
\end{equation}
\noindent where $\mathbb{E}_{\Tilde{e}} \left[ e^{f(\mathcal{D}^{(m+1)})}\right]$ denotes the expectation of $e^{f(\mathcal{D}^{(m+1)})}$ w.r.t. new entity distribution. To upper bound the expectation, we have the following inequality:
\begin{align} 
    f(\mathcal{D}^{(m+1)}) &= 2(|\mathcal{D}^{(m+1)}|-1)\mathbb{E}_{\phi\sim q(\phi^{(m+1)})} \left[(\Delta\mathcal{L})^2\right] - \mathbb{KL}(q(\phi^{(m+1)})\|p(\phi^{(m)})) \\
    & = \mathbb{E}_{\phi\sim q(\phi^{(m+1)})} \left[\log \left(e^{2(|\mathcal{D}^{(m+1)}|-1)(\Delta\mathcal{L})^2} \frac{q(\phi^{(m+1)})}{p(\phi^{(m)})}\right)\right] \\
    & \leq \log \left(\mathbb{E}_{\phi\sim q(\phi^{(m+1)})}\left[ e^{2(|\mathcal{D}^{(m+1)}|-1)(\Delta\mathcal{L})^2} \frac{q(\phi^{(m+1)})}{p(\phi^{(m)})}\right]\right) \\
    & = \log \left(\mathbb{E}_{\phi\sim p(\phi^{(m)})}\left[ e^{2(|\mathcal{D}^{(m+1)}|-1)(\Delta\mathcal{L})^2}\right]\right),
\end{align}
\noindent where  Jensen’s inequality is utilzied to derive the inequality. Therefore, we have
\begin{align}
    \mathbb{E}_{\Tilde{e}} \left[ e^{f(\mathcal{D}^{(m+1)})}\right] &\leq \mathbb{E}_{\Tilde{e}} \mathbb{E}_{\phi\sim p(\phi^{(m)})} \left[ e^{2(|\mathcal{D}^{(m+1)}|-1)(\Delta\mathcal{L})^2}\right] \\
    &= \mathbb{E}_{\phi\sim p(\phi^{(m)})}\mathbb{E}_{\Tilde{e}} \left[ e^{2(|\mathcal{D}^{(m+1)}|-1)(\Delta\mathcal{L})^2}\right],
\end{align}
\noindent we switch the order of expectations because $p(\phi^{(m)})$ is independent to $\mathcal{D}^{(m+1)}$. Next, based on Hoeffding’s inequality, we have:
\begin{equation}
    p(\Delta\mathcal{L} > \epsilon) \leq e^{-2|\mathcal{D}^{(m+1)}|\epsilon^2},
\end{equation}
\noindent and we can further derive the following inequality:
\begin{equation}
    \mathbb{E}_{\Tilde{e}} \left[ e^{f(\mathcal{D}^{(m+1)})}\right] \leq \mathbb{E}_{\phi\sim p(\phi^{(m)})} \mathbb{E}_{\Tilde{e}}\left[ e^{2(|\mathcal{D}^{(m+1)}|-1)(\Delta\mathcal{L})^2}\right] \leq |\mathcal{D}^{(m+1)}|.
    \label{eq:lemma}
\end{equation}

Combining Eq.~\ref{eq:lemma} and Eq.~\ref{eq:Markov}, we get:
\begin{equation}
    p(f(\mathcal{D}^{(m+1)}) > \epsilon) \leq \frac{|\mathcal{D}^{(m+1)}|}{e^\epsilon} = \delta,
\end{equation}
where $\delta = {|\mathcal{D}^{(m+1)}|} / {e^\epsilon}$. Therefore, with probability of at least $1-\delta$, we have that for all $\phi^{(m+1)}$:
\begin{align}
f(\mathcal{D}^{(m+1)}) &= 2(|\mathcal{D}^{(m+1)}|-1)\mathbb{E}_{\phi\sim q(\phi^{(m+1)})} \left[(\Delta\mathcal{L})^2\right] - \mathbb{KL}(q(\phi^{(m+1)})\|p(\phi^{(m)})) \\
&\leq \frac{\log {|\mathcal{D}^{(m+1)}|}}{\delta}.
\end{align}

Further, by utilizing Jensen’s inequality again, we have:
\begin{align}
    \left(\mathbb{E}_{\phi\sim q(\phi^{(m+1)})}\left[(\Delta\mathcal{L})\right]\right)^2 
    &\leq \mathbb{E}_{\phi\sim q(\phi^{(m+1)})} \left[(\Delta\mathcal{L})^2\right] \\
    &\leq \frac{\mathbb{KL}(q(\phi^{(m+1)})\|p(\phi^{(m)})) + \frac{\log{|\mathcal{D}^{(m+1)}|}}{\delta}}{2(|\mathcal{D}^{(m+1)}|-1}.
    \label{ap:proof_final}
\end{align}

Subsituting definition of $\Delta\mathcal{L}$ in Eq.~\ref{ap:proof_final}, we proof the PAC-Bayes Generalization Bound on Temporal Adaptation:
\begin{equation}
    \mathcal{L}(f_{\phi^{(m)}}, \mathcal{D}^{(m+1)}) \leq \hat{\mathcal{L}}(f_{\phi^{(m)}}, \mathcal{D}^{(m+1)}) + \sqrt{\frac{\mathbb{KL}(q(\phi^{(m+1)})\|p(\phi^{(m)})) + \log\frac{|\mathcal{D}^{(m+1)}|}{\delta}}{2|\mathcal{D}^{(m+1)}| - 1}}
    \label{ap:proof_theo}
\end{equation}

Based on Eq.~\ref{ap:proof_theo}, we define the following {\em temporal adaptation regularizer}:
\begin{equation}
    \mathcal{R}(f_{\phi^{(m)}}; \mathcal{D}^{(m+1)}) \triangleq  \hat{\mathcal{L}}(f_{\phi^{(m)}}, \mathcal{D}^{(m+1)}) + \sqrt{\frac{\mathbb{KL}(q(\phi^{(m+1)})\|p(\phi^{(m)})) + \log\frac{|\mathcal{D}^{(m+1)}|}{\delta}}{2|\mathcal{D}^{(m+1)}| - 1}}.
\end{equation}
\end{proof}

\noindent\textbf{Remark}. To interpret the temporal adaptation regularizer $\mathcal{R}(f_{\phi^{(m)}}; \mathcal{D}^{(m+1)})$, it can be viewed as a combination of empirical risk on the query set in the new time interval as well as a regularizer of parameter distribution across time intervals in form of KL-divergence. The regularizer along time domain can guarantee that global knowledge $\phi$ is trained by predictive losses from all time intervals without overfitting in specific time interval. Such regularizre provides a stable feedback signal to guide the outer optimization phase. Thus, we can improve the generalization ability of our meta-learner over time by the following update step by step,

\subsection{Datasets}
\label{ap:datasets}

\begin{figure}[h]
    \centering
    \includegraphics[width = 0.32\linewidth]{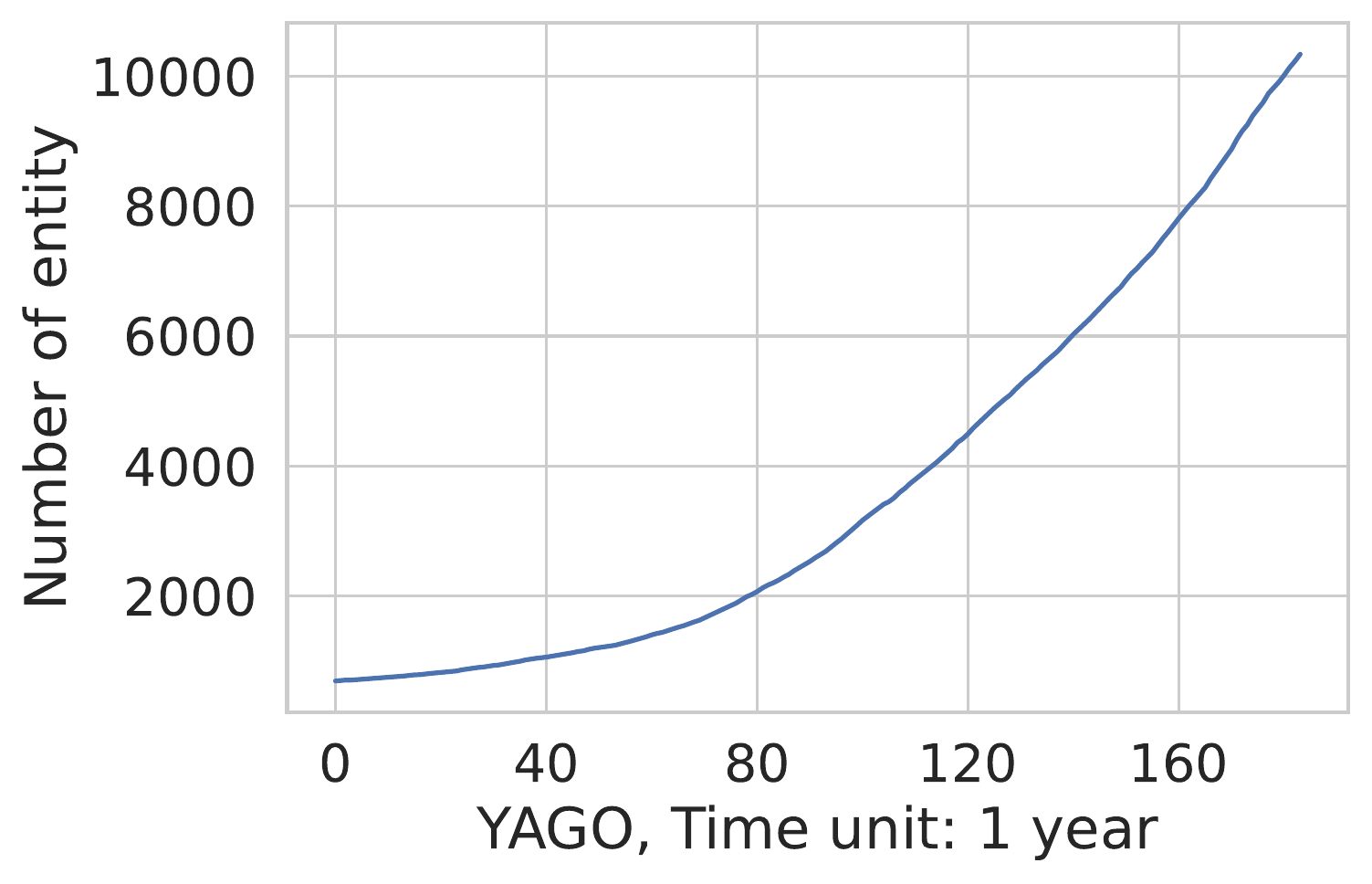}
    \includegraphics[width = 0.32\linewidth]{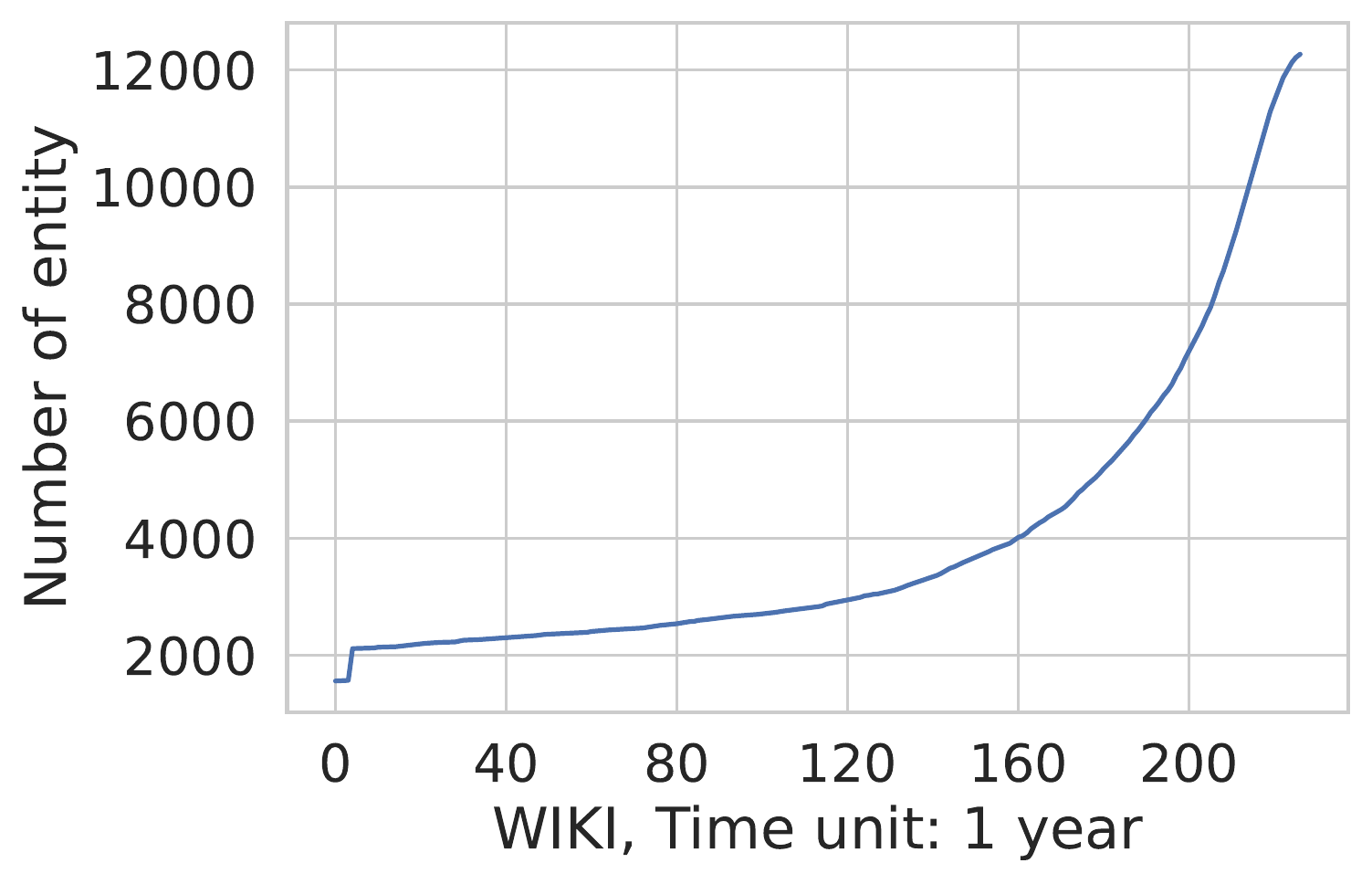}
    \includegraphics[width = 0.32\linewidth]{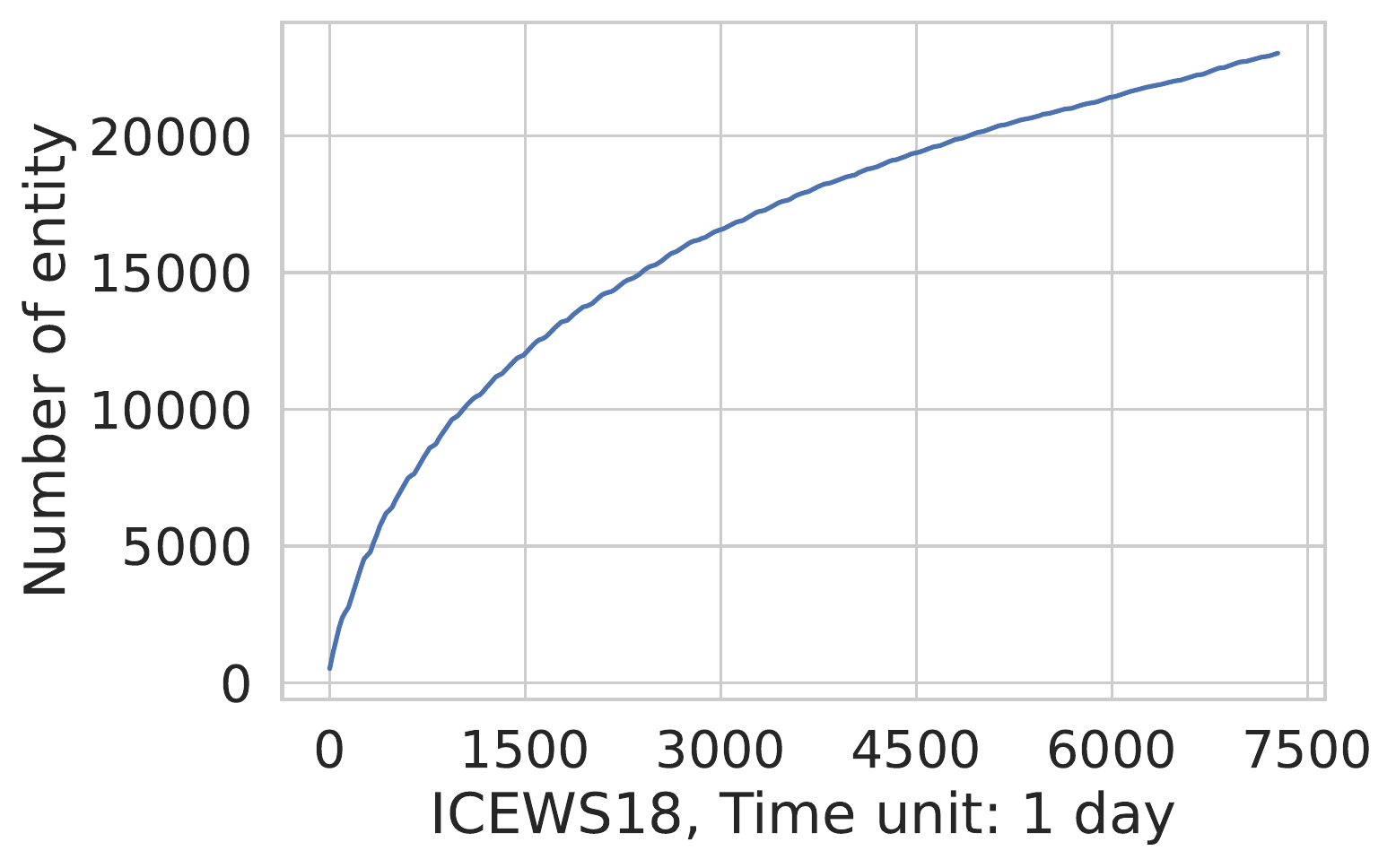}
    \caption{Number of entities over time. New entities continuously emerge on three public TKGs.}
    \label{ap:dataset_new}
\end{figure}

\begin{figure}[h]
    \centering
    \includegraphics[width = 0.32\linewidth]{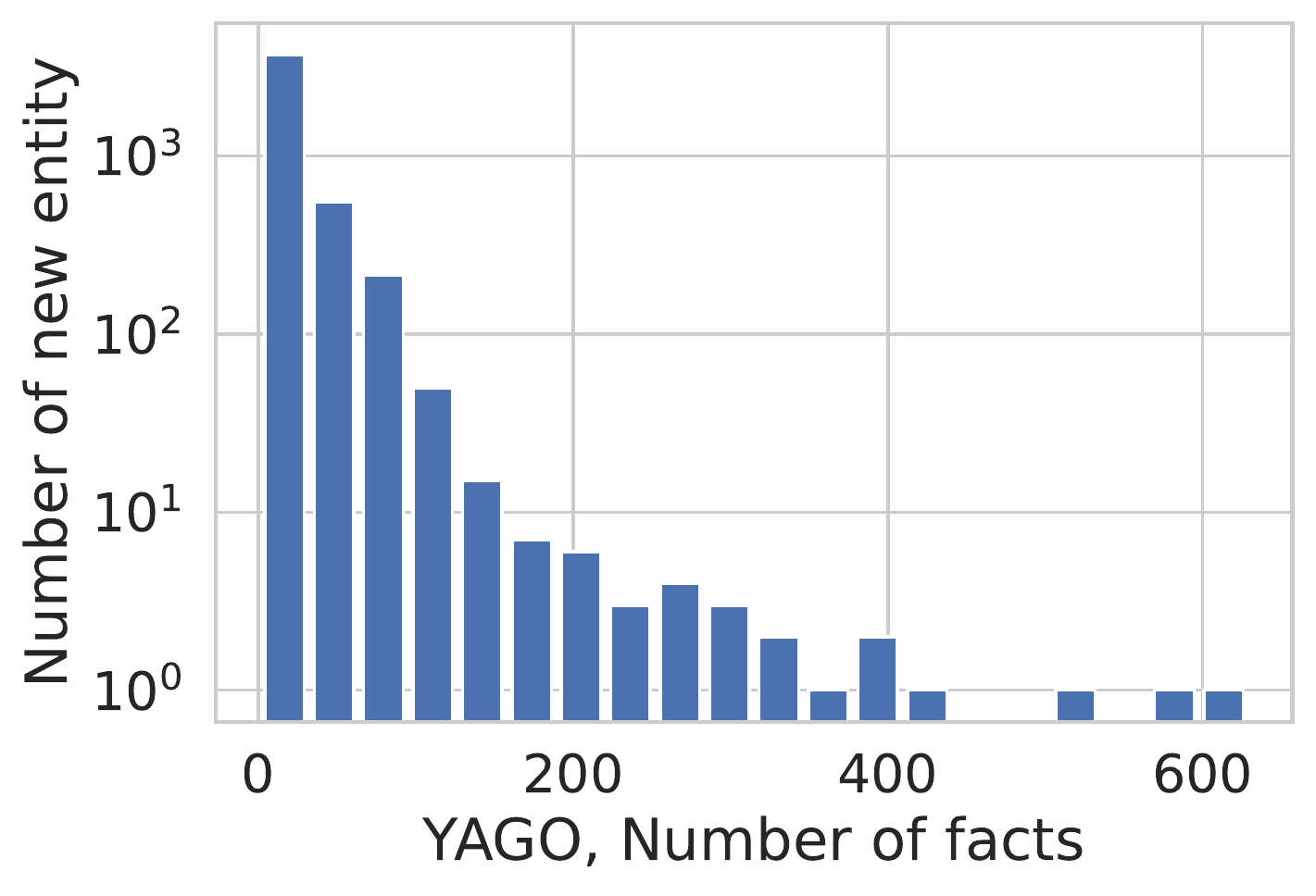}
    \includegraphics[width = 0.32\linewidth]{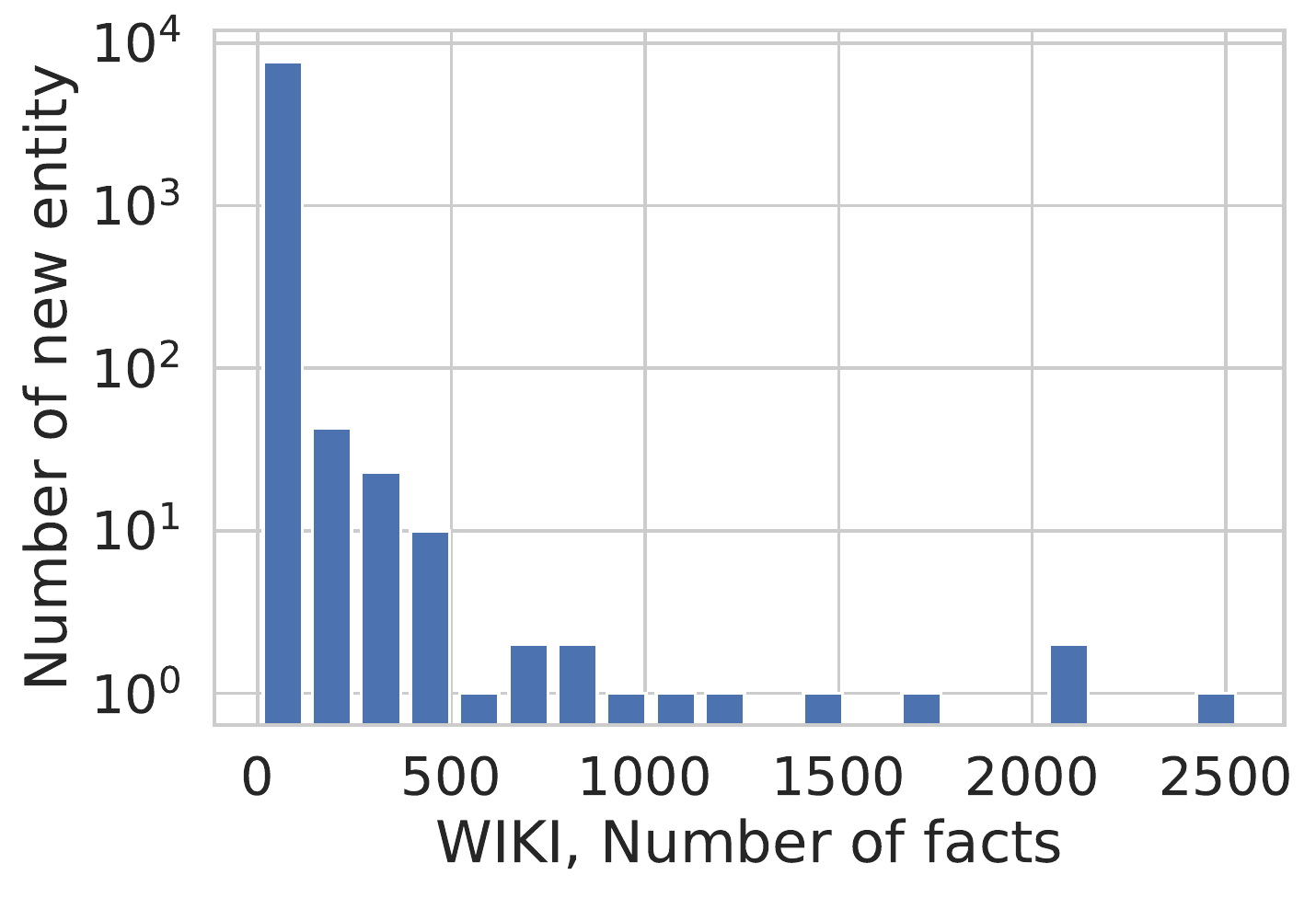}
    \includegraphics[width = 0.32\linewidth]{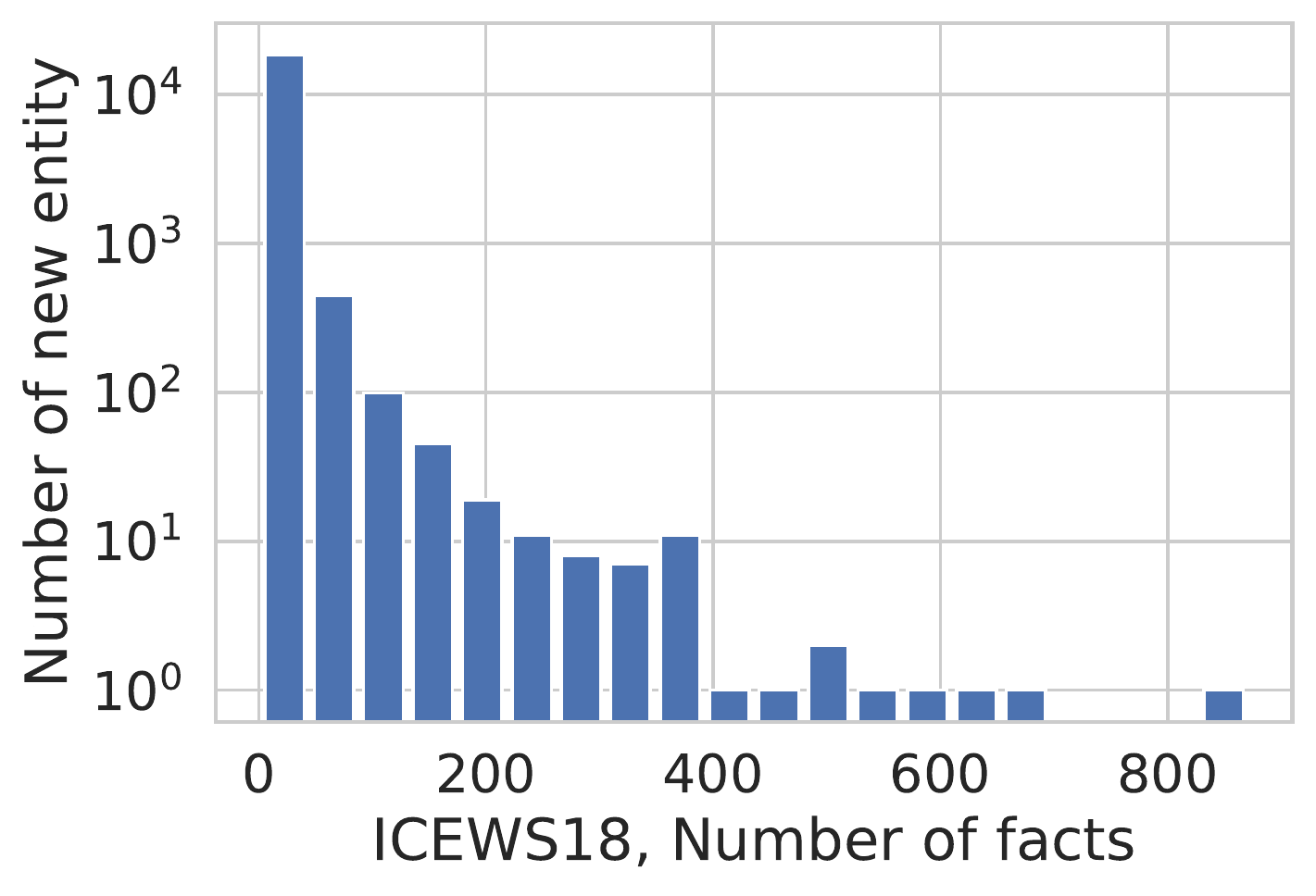}
    \includegraphics[width = 0.32\linewidth]{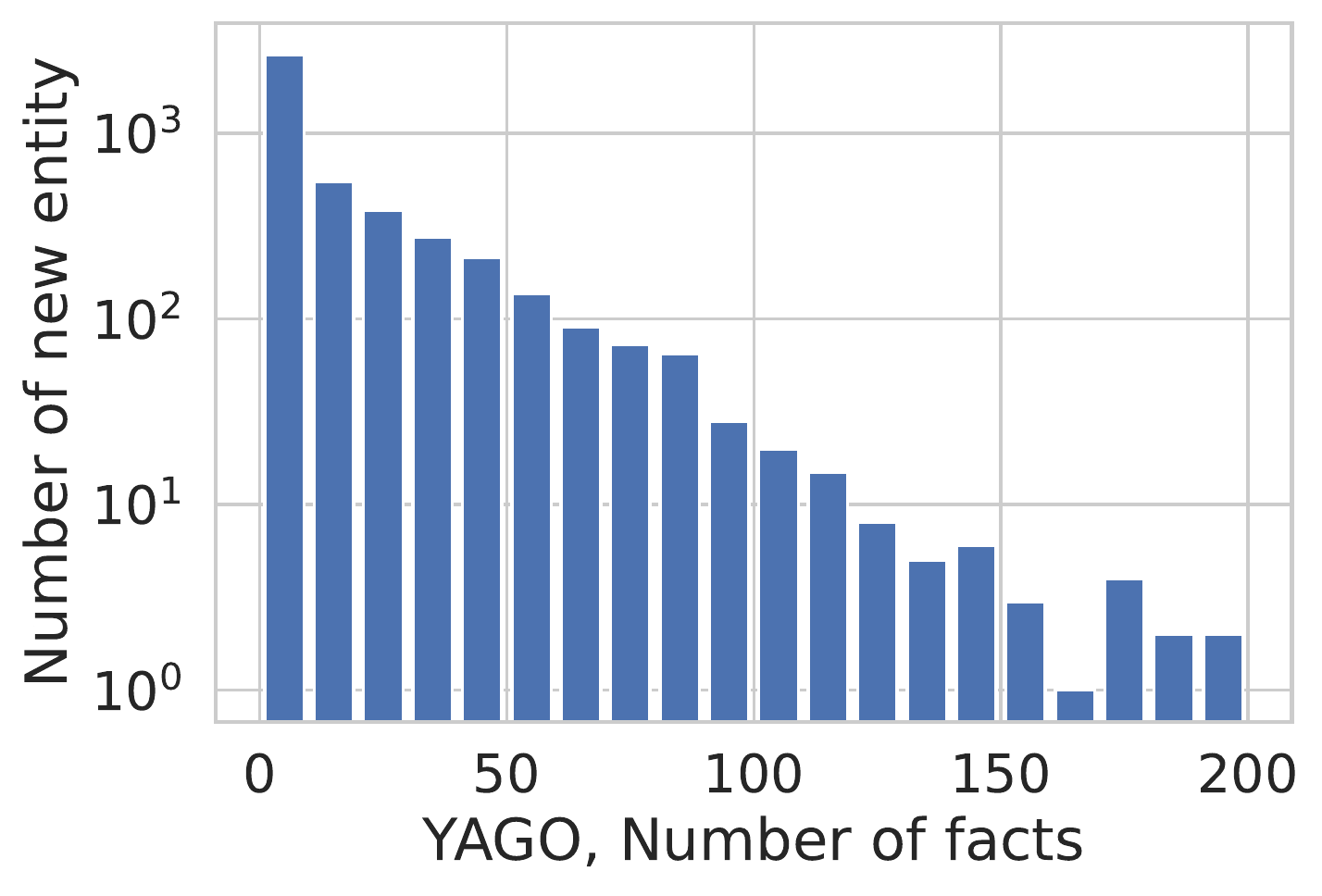}
    \includegraphics[width = 0.32\linewidth]{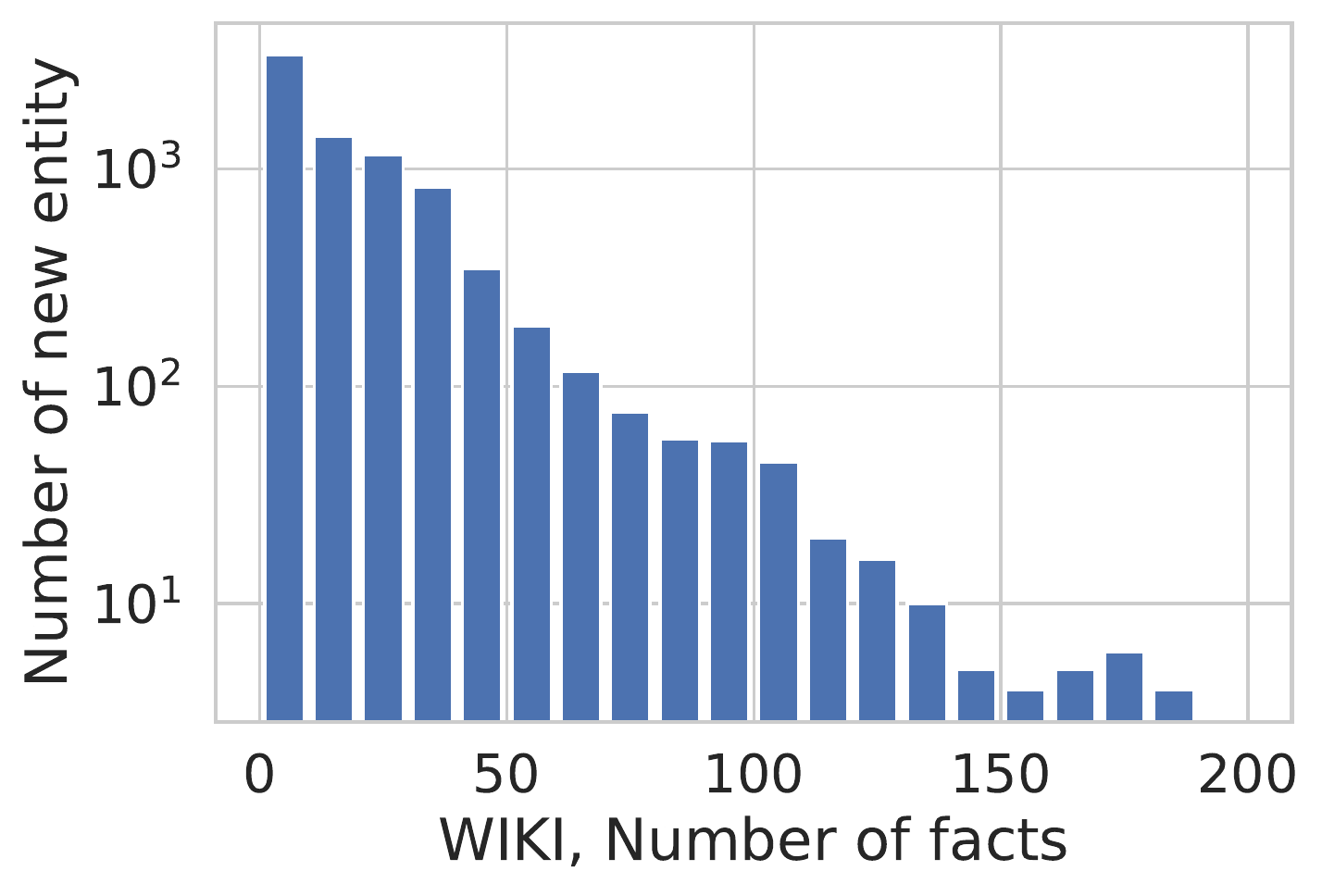}
    \includegraphics[width = 0.32\linewidth]{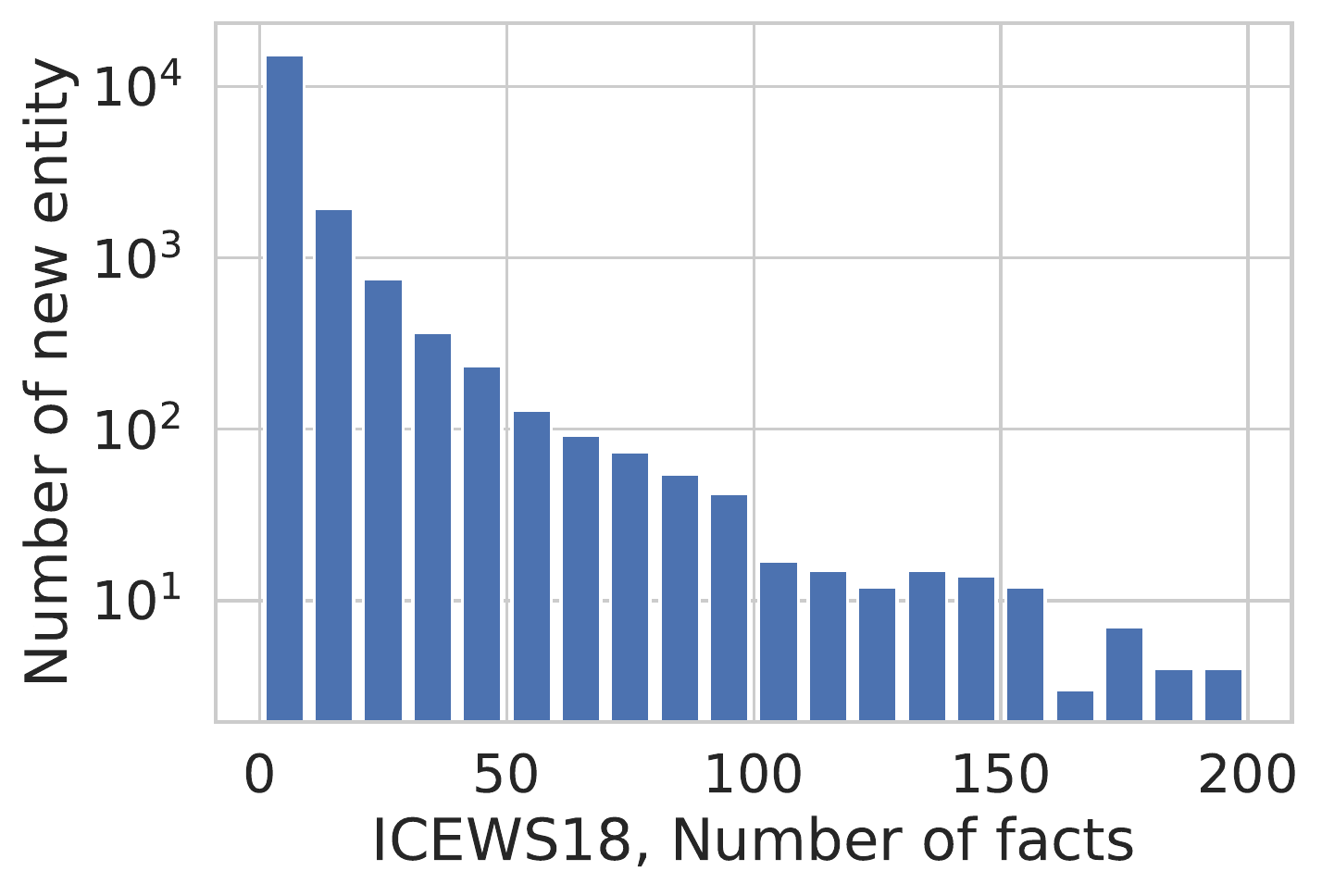}
    \caption{Distribution for new entity occurrences. The figures in first row show the complete distribution, and the figures in second row show the detailed distribution between [0,200]. Most of new entities are associated with limited amount of facts.}
    \label{ap:dataset_dist}
\end{figure}

\noindent \textbf{Dataset Information}. We validate the proposed MetaTKGR framework on three public temporal knowledge graphs: 1) \textbf{YAGO}~\cite{YAGO}: YAGO is a collection from the Wikipedias in multiple languages. 2)\textbf{WIKI}~\cite{WIKI}: WIKI is a newly collected temporal knowledge graph of Wikipedias. 3)\textbf{ICEWS18}~\cite{ICEWS18}: Integrated Crisis Early Warning System (ICEWS18) is the collection of coded interactions between socio-political actors which are extracted from news articles. ICEWS18 is collected from 1/1/2018 to 10/31/2018, and the minimum time unit is one day; YAGO and WIKI span much longer periods (184 years and 232 years respectively), with one year as minimum time units. They can validate our temporal framework within different time ranges. Notably, ICEWS18 represents temporal facts as quadruples $(e_s, r, e_o, t)$, and WIKI, YAGO datasets consist of facts in format $(e_s, r, e_o, [t_s, t_e])$, where each fact is associated with a valid time range from start time $t_s$ to end time $t_e$. We follow~\cite{Renet} to preprocess WIKI and YAGO. Specifically, we preprocess the format such that each fact is converted to a sequence $\{(e_s, r, e_o, t_s), (e_s, r, e_o, t_s+1), \cdots, (e_s, r, e_o, t_e)\}$ from $t_s$ to $t_e$, with the minimum time unit as one step. Noisy events of early years are removed (before 1786 for WIKI and 1830 for YAGO). Figure~\ref{ap:dataset_new} shows the amount of new entities appearing over time. New entities continuously join the TKGs on three public TKGs, {\bf e.g., $41.7$, $61.3\%$, $9.8\%$ of entities on YAGO, WIKI, and ICEWS18 are new entities that firstly appear in the last $25\%$ time steps}. We further investigate the amount of facts associated with those new entities. Figure~\ref{ap:dataset_dist} shows the corresponding distributions. {\bf Most new entities are associated with a limited amount of facts.} Figure~\ref{ap:dataset_new} and Figure~\ref{ap:dataset_dist} validate the practical value of our proposed setting, which aims to predict the future facts of newly emerging entities based on initial few-shot observations.

\noindent \textbf{Splitting Scheme}. Given the temporal knowledge graph, we first split the time duration into four parts with a ratio of 0.4:0.25:0.1:0.25 chronologically, then we collect the entities that firstly appear in each period as well as the associated facts as training/meta-training/meta-validation/meta-test set. For the nodes in the meta-validation and meta-test set, we assume that their first $K=1,2,3$ facts are known and can be used to adapt parameters for meta-learning based methods or train/compute new entity representations for non-meta ones. The remaining facts are utilized to evaluate performance of baseline models and MetaTKGR. Detailed setup is introduced in Appendix~\ref{ap:setup}.

\subsection{Baselines}
\label{ap:baselines}
We describe the baseline models utilized in the experiments in detail:
\begin{itemize}
    \item \textbf{TransE}~\cite{TransE} is a translation-based embedding model, where both entities and relations are represented as vectors in the latent space. The relation is utilized as a translation operation between the subject and the object entity;
    \item \textbf{TransR}~\cite{TransR}  advances TransE by optimizing modeling of n-n relations, where each entity embedding can be projected to hyperplanes defined by relations;
    \item \textbf{RotatE}~\cite{RotatE} represents entities as complex vectors and relations as rotation operations in a complex vector space;
    \item \textbf{RE-NET}~\cite{Renet} is a generative model to predict future facts on temporal knowledge graphs, which employs a recurrent neural network to model the entity evolution, and utilizes a neighborhood aggregator to consider the connection of facts at the same time intervals;
    \item \textbf{RE-GCN}~\cite{RE-GCN} learns the temporal representations of both entities and relations by modeling the KG sequence recurrently;
    \item \textbf{LAN}~\cite{LAN} computes the embedding of entities by GNN-based neighboring aggregation scheme, and attention mechanisms are utilized to consider relations with neighboring information for new entities;
    \item \textbf{I-GEN}~\cite{GEN} utilizes meta-learning technique to learn the representations of new entities, which is achieved by aggregating information from neighbors attentively;
    \item \textbf{T-GEN}~\cite{GEN} further extends I-GEN by optimizing predictions for unseen-unseen links among unseen users. A stochastic inference is proposed to model the randomness  of such links;
    \item \textbf{MetaDyGNN}~\cite{MetaDyGcn} is a recent work modeling the links predictions for new nodes on homogeneous graphs. A hierarchical meta-learner is proposed to better extract global knowledge which is beneficial for link prediction task.
\end{itemize}

\begin{table}[h]
\caption{The results of 1-shot temporal knowledge graph reasoning.}
\label{ta:1_shot_std}
\scriptsize
\centering
\resizebox{0.9\columnwidth}{!}{
\fontsize{8.5}{11}\selectfont
\begin{tabular}{c|cccc|cccc|cccc}
\toprule
\multirow{2}{*}{\textbf{Models}} & \multicolumn{4}{c|}{\textbf{YAGO}} & \multicolumn{4}{c|}{\textbf{WIKI}} & \multicolumn{4}{c}{\textbf{ICEWS18}} \\ \cline{2-13} 
 & \textbf{MRR} & \textbf{H@1} & \textbf{H@3} & \textbf{H@10} & \textbf{MRR} & \textbf{H@1} & \textbf{H@3} & \textbf{H@10} & \textbf{MRR} & \textbf{H@1} & \textbf{H@3} & \textbf{H@10} \\ \hline
 \rowcolor{mygray}
\textbf{TransE} & 0.183 & 0.136 & 0.196 & 0.268 & 0.144 & 0.118 & 0.139 & 0.186 & 0.049 & 0.030 & 0.053 & 0.077 \\
\textbf{TransE} (std) & $\pm$ 0.008 & $\pm$ 0.005 & $\pm$ 0.004 & $\pm 0.007$ & $\pm$ 0.005 & $\pm$ 0.008 & $\pm$ 0.010 & $\pm 0.011$ & $\pm$ 0.004 & $\pm$ 0.004 & $\pm$ 0.003 & $\pm 0.005$ \\
\rowcolor{mygray}
\textbf{TransR} & 0.189 & 0.132 & 0.208 & 0.270 & 0.160 & 0.129 & 0.145 & 0.183 & 0.050 & 0.034 & 0.056 & 0.080 \\
\textbf{TransR} (std) & $\pm$ 0.007 & $\pm$ 0.011 & $\pm$ 0.007 & $\pm 0.009$ & $\pm$ 0.008 & $\pm$ 0.012 & $\pm$ 0.013 & $\pm 0.009$ & $\pm$ 0.005 & $\pm$ 0.006 & $\pm$ 0.004 & $\pm 0.003$ \\
\rowcolor{mygray}
\textbf{RotatE} & 0.215 & 0.143 & 0.217 & 0.280 & 0.175 & 0.139 & 0.153 & 0.190 & 0.068 & 0.053 & 0.062 & 0.098 \\ 
\textbf{RotatE} (std) & $\pm$ 0.005 & $\pm$ 0.009 & $\pm$ 0.012 & $\pm 0.011$ & $\pm$ 0.009 & $\pm$ 0.013 & $\pm$ 0.012 & $\pm 0.008$ & $\pm$ 0.006 & $\pm$ 0.004 & $\pm$ 0.010 & $\pm 0.010$ \\\hline
\rowcolor{mygray}
\textbf{RE-NET} & 0.221 & 0.159 & 0.225 & 0.304 & 0.212 & 0.178 & 0.183 & 0.259 & 0.185 & 0.129 & 0.209 & 0.250 \\ 
\textbf{RE-NET} (std) & $\pm$ 0.006 & $\pm$ 0.008 & $\pm$ 0.004 & $\pm 0.007$ & $\pm$ 0.010 & $\pm$ 0.008 & $\pm$ 0.009 & $\pm 0.010$ & $\pm$ 0.013 & $\pm$ 0.011 & $\pm$ 0.012 & $\pm 0.009$ \\ \hline
\rowcolor{mygray}
\textbf{LAN} & 0.196 & 0.145 & 0.210 & 0.269 & 0.174 & 0.126 & 0.185 & 0.275 & 0.170 & 0.086 & 0.159 & 0.301 \\
\textbf{LAN} (std) & $\pm$ 0.015 & $\pm$ 0.015 & $\pm$ 0.019 & $\pm 0.020$ & $\pm$ 0.016 & $\pm$ 0.015 & $\pm$ 0.019 & $\pm 0.014$ & $\pm$ 0.021 & $\pm$ 0.019 & $\pm$ 0.022 & $\pm 0.019$ \\
\rowcolor{mygray}
\textbf{I-GEN} & 0.238 & 0.195 & 0.243 & 0.321 & 0.181 & 0.156 & 0.166 & 0.241 & 0.199 & 0.101 & 0.217 & 0.320 \\
\textbf{I-GEN} (std) & $\pm$ 0.007 & $\pm$ 0.005 & $\pm$ 0.009 & $\pm 0.011$ & $\pm$ 0.008 & $\pm$ 0.006 & $\pm$ 0.011 & $\pm 0.013$ & $\pm$ 0.021 & $\pm$ 0.019 & $\pm$ 0.022 & $\pm 0.019$ \\
\rowcolor{mygray}
\textbf{T-GEN} & 0.247 & 0.199 & 0.266 & 0.331 & 0.202 &  0.167 &  0.189 & 0.245 & 0.131 & 0.072 & 0.139 & 0.262 \\ 
\textbf{T-GEN} (std) & $\pm$ 0.023 & $\pm$ 0.019 & $\pm$ 0.022 & $\pm 0.031$ & $\pm$ 0.019 & $\pm$ 0.013 & $\pm$ 0.022 & $\pm 0.024$ & $\pm$ 0.031 & $\pm$ 0.026 & $\pm$ 0.029 & $\pm 0.032$ \\ \hline
\rowcolor{mygray}
\textbf{MetaDyGNN} & 0.269 & 0.219 & 0.297 & 0.396 & 0.241 & 0.176 & 0.271 & 0.371 & 0.249 & 0.179 & 0.269 & 0.420 \\
\textbf{MetaDyGNN} (std) & $\pm$ 0.010 & $\pm$ 0.008 & $\pm$ 0.011 & $\pm 0.015$ & $\pm$ 0.013 & $\pm$ 0.010 & $\pm$ 0.013 & $\pm 0.016$ & $\pm$ 0.012 & $\pm$ 0.009 & $\pm$ 0.014 & $\pm 0.018$ \\ \hline
\rowcolor{mygray}
\textbf{MetaTKGR} & \textbf{0.294*} & \textbf{0.240*} & \textbf{0.319*} & \textbf{0.428*} & \textbf{0.277*} & \textbf{0.203*} & \textbf{0.3071*} & \textbf{0.419*} & \textbf{0.295*} & \textbf{0.207*} & \textbf{0.309*} & \textbf{0.496*} \\
\textbf{MetaTKGR} (std) & $\pm$ 0.012 & $\pm$ 0.008 & $\pm$ 0.014 & $\pm 0.009$ & $\pm$ 0.015 & $\pm$ 0.011 & $\pm$ 0.013 & $\pm 0.010$ & $\pm$ 0.007 & $\pm$ 0.019 & $\pm$ 0.016 & $\pm 0.012$ \\

\bottomrule
\end{tabular}}
\vspace{-6mm}
\end{table}

\begin{table}[h]
\caption{The results of 2-shot temporal knowledge graph reasoning.}
\label{ta:2_shot_std}
\scriptsize
\centering
\resizebox{0.9\columnwidth}{!}{
\fontsize{8.5}{11}\selectfont
\begin{tabular}{c|cccc|cccc|cccc}
\toprule
\multirow{2}{*}{\textbf{Models}} & \multicolumn{4}{c|}{\textbf{YAGO}} & \multicolumn{4}{c|}{\textbf{WIKI}} & \multicolumn{4}{c}{\textbf{ICEWS18}} \\ \cline{2-13} 
 & \textbf{MRR} & \textbf{H@1} & \textbf{H@3} & \textbf{H@10} & \textbf{MRR} & \textbf{H@1} & \textbf{H@3} & \textbf{H@10} & \textbf{MRR} & \textbf{H@1} & \textbf{H@3} & \textbf{H@10} \\ \hline
\rowcolor{mygray}
\textbf{TransE} & 0.193 & 0.139 & 0.204 & 0.304 & 0.146 & 0.122 & 0.146 & 0.213 & 0.058 & 0.042 & 0.054 & 0.086  \\
\textbf{TransE} (std) & $\pm$ 0.007 & $\pm$ 0.006 & $\pm$ 0.009 & $\pm 0.013$ & $\pm$ 0.006 & $\pm$ 0.005 & $\pm$ 0.007 & $\pm 0.009$ & $\pm$ 0.005 & $\pm$ 0.004 & $\pm$ 0.004 & $\pm 0.005$ \\
\rowcolor{mygray}
\textbf{TransR} & 0.198 & 0.140 & 0.217 & 0.312 & 0.160 & 0.131 & 0.170 & 0.225 & 0.060 & 0.054 & 0.061 & 0.090 \\
\textbf{TransR} (std) & $\pm$ 0.007 & $\pm$ 0.005 & $\pm$ 0.009 & $\pm 0.011$ & $\pm$ 0.005 & $\pm$ 0.004 & $\pm$ 0.008 & $\pm 0.009$ & $\pm$ 0.003 & $\pm$ 0.004 & $\pm$ 0.006 & $\pm 0.007$ \\
\rowcolor{mygray}
\textbf{RotatE} & 0.210 & 0.162 & 0.244 & 0.359 & 0.201 & 0.160 & 0.214 & 0.268 & 0.070 & 0.065 & 0.068 & 0.091 \\
\textbf{RotatE} (std) & $\pm$ 0.010 & $\pm$ 0.006 & $\pm$ 0.010 & $\pm 0.009$ & $\pm$ 0.009 & $\pm$ 0.006 & $\pm$ 0.005 & $\pm 0.008$ & $\pm$ 0.004 & $\pm$ 0.003 & $\pm$ 0.008 & $\pm 0.008$ \\ \hline
\rowcolor{mygray}
\textbf{RE-NET} & 0.233 & 0.220 & 0.281 & 0.390 & 0.239 & 0.191 & 0.238 & 0.294 & 0.200 & 0.109 & 0.214 & 0.341 \\
\textbf{RE-NET} (std) & $\pm$ 0.008 & $\pm$ 0.007 & $\pm$ 0.008 & $\pm 0.010$ & $\pm$ 0.005 & $\pm$ 0.007 & $\pm$ 0.008 & $\pm 0.013$ & $\pm$ 0.010 & $\pm$ 0.012 & $\pm$ 0.014 & $\pm 0.019$ \\ \hline
\rowcolor{mygray}
\textbf{LAN} & 0.200 & 0.144 & 0.209 & 0.310 & 0.162 & 0.108 & 0.178 & 0.273 & 0.188 & 0.089 & 0.176 & 0.317 \\
\textbf{LAN} (std) & $\pm$ 0.007 & $\pm$ 0.008 & $\pm$ 0.004 & $\pm 0.006$ & $\pm$ 0.010 & $\pm$ 0.006 & $\pm$ 0.008 & $\pm 0.011$ & $\pm$ 0.008 & $\pm$ 0.009 & $\pm$ 0.011 & $\pm 0.013$ \\
\rowcolor{mygray}
\textbf{I-GEN} & 0.237 & 0.216 & 0.291 & 0.402 & 0.223 & 0.185 & 0.230 & 0.287 & 0.177 & 0.092 & 0.212 & 0.337 \\
\textbf{I-GEN} (std) & $\pm$ 0.011 & $\pm$ 0.010 & $\pm$ 0.014 & $\pm 0.017$ & $\pm$ 0.009 & $\pm$ 0.009 & $\pm$ 0.012 & $\pm 0.013$ & $\pm$ 0.010 & $\pm$ 0.014 & $\pm$ 0.013 & $\pm 0.020$ \\
\rowcolor{mygray}
\textbf{T-GEN} & 0.260 & 0.200 & 0.278 & 0.379 & 0.240 & 0.193 & 0.248 & 0.319 & 0.161 & 0.111 & 0.185 & 0.259 \\
\textbf{T-GEN} (std) & $\pm$ 0.029 & $\pm$ 0.020 & $\pm$ 0.030 & $\pm 0.032$ & $\pm$ 0.023 & $\pm$ 0.019 & $\pm$ 0.022 & $\pm 0.028$ & $\pm$ 0.032 & $\pm$ 0.027 & $\pm$ 0.030 & $\pm 0.031$ \\ \hline
\rowcolor{mygray}
\textbf{MetaDyGNN} & 0.316 & 0.265 & 0.332 & 0.496 & 0.271 & 0.225 & 0.287 & 0.390 & 0.269 & 0.184 & 0.267 & 0.441 \\
\textbf{MetaDyGNN} (std) & $\pm$ 0.009 & $\pm$ 0.006 & $\pm$ 0.009 & $\pm 0.011$ & $\pm$ 0.010 & $\pm$ 0.009 & $\pm$ 0.014 & $\pm 0.018$ & $\pm$ 0.011 & $\pm$ 0.009 & $\pm$ 0.011 & $\pm 0.015$ \\ \hline
\rowcolor{mygray}
\textbf{MetaTKGR} & \textbf{0.356} & \textbf{0.284} & \textbf{0.372} & \textbf{0.526} & \textbf{0.309} & \textbf{0.249} & \textbf{0.325} & \textbf{0.441} & \textbf{0.300} & \textbf{0.208} & \textbf{0.311} & \textbf{0.500} \\
\textbf{MetaTKGR} (std) & $\pm$ 0.013 & $\pm$ 0.007 & $\pm$ 0.012 & $\pm 0.010$ & $\pm$ 0.014 & $\pm$ 0.013 & $\pm$ 0.012 & $\pm 0.013$ & $\pm$ 0.011 & $\pm$ 0.010 & $\pm$ 0.012 & $\pm 0.014$ \\
\bottomrule
\end{tabular}}
\vspace{-6mm}
\end{table}

\begin{table}[h]
\caption{The results of 3-shot temporal knowledge graph reasoning.}
\label{ta:3_shot_std}
\scriptsize
\centering
\resizebox{0.9\columnwidth}{!}{
\fontsize{8.5}{11}\selectfont
\begin{tabular}{c|cccc|cccc|cccc}
\toprule
\multirow{2}{*}{\textbf{Models}} & \multicolumn{4}{c|}{\textbf{YAGO}} & \multicolumn{4}{c|}{\textbf{WIKI}} & \multicolumn{4}{c}{\textbf{ICEWS18}} \\ \cline{2-13} 
 & \textbf{MRR} & \textbf{H@1} & \textbf{H@3} & \textbf{H@10} & \textbf{MRR} & \textbf{H@1} & \textbf{H@3} & \textbf{H@10} & \textbf{MRR} & \textbf{H@1} & \textbf{H@3} & \textbf{H@10} \\ \hline
 \rowcolor{mygray}
\textbf{TransE} & 0.223 & 0.158 & 0.242 & 0.360 & 0.161 & 0.111 & 0.171 & 0.236 & 0.058 & 0.052 & 0.062 & 0.098 \\
\textbf{TransE} (std) & $\pm$ 0.006 & $\pm$ 0.004 & $\pm$ 0.007 & $\pm 0.011$ & $\pm$ 0.007 & $\pm$ 0.008 & $\pm$ 0.010 & $\pm 0.011$ & $\pm$ 0.004 & $\pm$ 0.002 & $\pm$ 0.007 & $\pm 0.006$ \\
\rowcolor{mygray}
\textbf{TransR} & 0.234 & 0.165 & 0.259 & 0.382 & 0.183 & 0.138 & 0.188 & 0.245 & 0.061 & 0.062 & 0.073 & 0.109 \\
\textbf{TransR} (std) & $\pm$ 0.009 & $\pm$ 0.007 & $\pm$ 0.010 & $\pm 0.013$ & $\pm$ 0.004 & $\pm$ 0.003 & $\pm$ 0.006 & $\pm 0.008$ & $\pm$ 0.007 & $\pm$ 0.003 & $\pm$ 0.005 & $\pm 0.008$ \\
\rowcolor{mygray}
\textbf{RotatE} & 0.241 & 0.182 & 0.278 & 0.409 & 0.232 & 0.171 & 0.223 & 0.284 & 0.078 & 0.074 & 0.082 & 0.128 \\
\textbf{RotatE} (std) & $\pm$ 0.012 & $\pm$ 0.007 & $\pm$ 0.013 & $\pm 0.011$ & $\pm$ 0.007 & $\pm$ 0.010 & $\pm$ 0.007 & $\pm 0.010$ & $\pm$ 0.006 & $\pm$ 0.007 & $\pm$ 0.009 & $\pm 0.011$ \\ \hline
\rowcolor{mygray}
\textbf{RE-NET} & 0.261 & 0.210 & 0.298 & 0.410 & 0.261 & 0.210 & 0.251 & 0.331 & 0.232 & 0.139 & 0.241 & 0.369 \\ 
\textbf{RE-NET} (std) & $\pm$ 0.010 & $\pm$ 0.009 & $\pm$ 0.009 & $\pm 0.012$ & $\pm$ 0.006 & $\pm$ 0.009 & $\pm$ 0.011 & $\pm 0.010$ & $\pm$ 0.012 & $\pm$ 0.009 & $\pm$ 0.012 & $\pm 0.018$ \\ \hline
\rowcolor{mygray}
\textbf{LAN} & 0.230 & 0.154 & 0.247 & 0.352 & 0.185 & 0.133 & 0.201 & 0.287 & 0.207 & 0.119 & 0.234 & 0.321 \\
\textbf{LAN} (std) & $\pm$ 0.010 & $\pm$ 0.007 & $\pm$ 0.009 & $\pm 0.012$ & $\pm$ 0.015 & $\pm$ 0.014 & $\pm$ 0.011 & $\pm 0.007$ & $\pm$ 0.009 & $\pm$ 0.014 & $\pm$ 0.015 & $\pm 0.013$ \\
\rowcolor{mygray}
\textbf{I-GEN} & 0.303 & 0.238 & 0.323 & 0.420 & 0.221 & 0.179 & 0.229 & 0.264 & 0.212 & 0.120 & 0.251 & 0.346 \\
\textbf{I-GEN} (std) & $\pm$ 0.013 & $\pm$ 0.011 & $\pm$ 0.011 & $\pm 0.015$ & $\pm$ 0.017 & $\pm$ 0.013 & $\pm$ 0.016 & $\pm 0.020$ & $\pm$ 0.014 & $\pm$ 0.012 & $\pm$ 0.019 & $\pm 0.012$ \\
\rowcolor{mygray}
\textbf{T-GEN} & 0.292 & 0.218 & 0.310 & 0.394 & 0.234 & 0.185 & 0.222 & 0.271 & 0.169 & 0.122 & 0.184 & 0.265 \\ 
\textbf{T-GEN} (std) & $\pm$ 0.027 & $\pm$ 0.024 & $\pm$ 0.028 & $\pm 0.041$ & $\pm$ 0.028 & $\pm$ 0.021 & $\pm$ 0.030 & $\pm 0.033$ & $\pm$ 0.016 & $\pm$ 0.013 & $\pm$ 0.020 & $\pm 0.022$ \\ \hline
\rowcolor{mygray}
\textbf{MetaDyGNN} & 0.350 & 0.270 & 0.379 & 0.511 & 0.309 & 0.238 & 0.309 & 0.459 & 0.307 & 0.216 & 0.309 & 0.469 \\ 
\textbf{MetaDyGNN} (std) & $\pm$ 0.013 & $\pm$ 0.009 & $\pm$ 0.011 & $\pm 0.014$ & $\pm$ 0.008 & $\pm$ 0.006 & $\pm$ 0.011 & $\pm 0.013$ & $\pm$ 0.006 & $\pm$ 0.004 & $\pm$ 0.007 & $\pm 0.011$ \\ \hline
\rowcolor{mygray}
\textbf{MetaTKGR} & \textbf{0.370*} & \textbf{0.303*} & \textbf{0.416*} & \textbf{0.558*} & \textbf{0.329*} & \textbf{0.253*} & \textbf{0.335*} & \textbf{0.489*} & \textbf{0.335*} & \textbf{0.249*} & \textbf{0.340*} & \textbf{0.527*} \\
\textbf{MetaTKGR} (std) & $\pm$ 0.016 & $\pm$ 0.012 & $\pm$ 0.015 & $\pm 0.008$ & $\pm$ 0.012 & $\pm$ 0.014 & $\pm$ 0.012 & $\pm 0.014$ & $\pm$ 0.009 & $\pm$ 0.015 & $\pm$ 0.014 & $\pm 0.010$ \\
\bottomrule
\end{tabular}}
\vspace{-5mm}
\end{table}

\subsection{Experimental Setup}
\label{ap:setup}

\noindent \textbf{Baseline Setup}. 
 For static knowledge graph reasoning methods, i.e., TransE, TransR, and RotatE, we ignore all time information in quadruples, and view temporal knowledge graphs as static, cumulative ones. Then we train the models with training set, meta-training set as well as first $K$ triples of each new entity in meta-validation and meta-test sets. For temporal knowledge graph reasoning methods RE-NET and RE-GCN, we train the models with training set, meta-training set as well as first $K$ quadruples of each new entity in meta-validation and meta-test sets. For baselines that optimize few-shot entities on static knowledge graphs, we ignore all time information in quadruples. We train them via training set and meta-training set, and adapt parameters via the first $K$ quadruples of each new entity in meta-validation and meta-test sets. As for MetaDyGNN, since it is designed for homogeneous graphs, we adopt score function of TransE in the framework to support it on temporal knowledge graphs. Similarly, training and meta-training sets are utilized to train initial model parameters, then the few-shot facts of new entities are used to adapt entity-specific model parameters. For fair comparisons, we keep the dimension of all embeddings as $128$, we feed pre-trained $1$-shot TransE embeddings to those that require initial entity/relation embeddings, and we train all baseline models and MetaTKGR on same GPUs (GeForce RTX 3090) and CPUs (AMD Ryzen Threadripper 3970X 32-Core Processor).

\noindent \textbf{MetaTKGR Setup}. We use training set to train MetaTKGR for parameter initialization, then we simulate few-shot tasks by utilizing meta-training set to adapt model parameters and further improve the temporal generalization ability. For new entities, we use initial $K=1,2,3$ facts to fine-tune entity-specific models. During evaluation, we tune hyperparameters based on {\em MRR} on meta-validation set, and report the performance on the remaining facts on meta-test set. Next, we report the choices of hyperparameters. For model training, we utilize Adam optimizer, and set maximum number of epochs as $50$. We set batch size as $20$, the dimension of all embeddings as $128$, and dropout rate as $0.5$. For the sake of efficiency,  we set the neighbor budget $b$ of temporal neighbor sampler as $16$, and employ $1$ neighborhood aggregation layer in temporal encoder. We divide query sets into $3$ time intervals to simulate the real scenario.  We perform a single step of gradient descent for inner loop optimization. We mainly tune margin value $\gamma$ in score functions in range $\{0.3,0.4,0.5,0.6,0.7\}$, inner/outer loop learning rate $\eta$ and $\beta$ in range $\{0.01,0.005,0.001,0.0005,0.0001,0.00005, 0.00001\}$. For YAGO and ICEWS18, we set $\gamma=0.5$, $\eta = \beta = 0.0001$. For WIKI, we set $\gamma = 0.4$, $\eta = 0.00005$, and $\beta = 0.0001$.

\subsection{Detailed Experimental Results}
\label{ap:std_results}
In this section, we report the complete experimental results of 1/2/3-shot temporal knowledge graph reasoning tasks in Table~\ref{ta:1_shot_std},~\ref{ta:2_shot_std},~\ref{ta:3_shot_std}, measured by $MRR$ and $Hit@\{1,3,10\}$. Average results on $5$ independent runs with different random seeds are reported. $*$ indicates the statistically significant improvements over the best baseline, with $p$-value smaller than $0.001$. We also report the detailed standard deviation for all models. MetaTKGR can beat all state-of-the-art baselines on all datasets, and we observe that MetaTKGR and most baselines are stable w.r.t. the performance with small standard deviations.

\end{document}